\documentclass{article} 
\usepackage{iclr2027_conference,times}

\usepackage{amsmath,amsfonts,bm}

\def\eqref#1{equation~\ref{#1}}

\def\1{\bm{1}}

\DeclareMathAlphabet{\mathsfit}{\encodingdefault}{\sfdefault}{m}{sl}
\SetMathAlphabet{\mathsfit}{bold}{\encodingdefault}{\sfdefault}{bx}{n}

\usepackage{hyperref}
\usepackage{url}
\usepackage{booktabs}       
\usepackage{amsfonts}       
\usepackage{nicefrac}       
\usepackage{microtype}      
\usepackage{xcolor}         
\usepackage{multirow}
\usepackage{array}
\usepackage{makecell}
\usepackage{subcaption}
\usepackage{graphicx}
\usepackage{amsthm,amsmath,amssymb}
\usepackage{float}
\usepackage{bm}
\usepackage{wrapfig}
\newtheorem{theorem}{Theorem}

\title{Streaming Video Editing with Easy Adaptation}

\author{
Yujia Hu \quad Jiajun Li \quad Zihao He \quad Songhua Liu\thanks{Corresponding author.} \\
Shanghai Jiao Tong University
}

\iclrfinalcopy 
\begin{document}

\maketitle

\begin{figure}[ht]
    \centering 
    \includegraphics[width=1.0\textwidth]{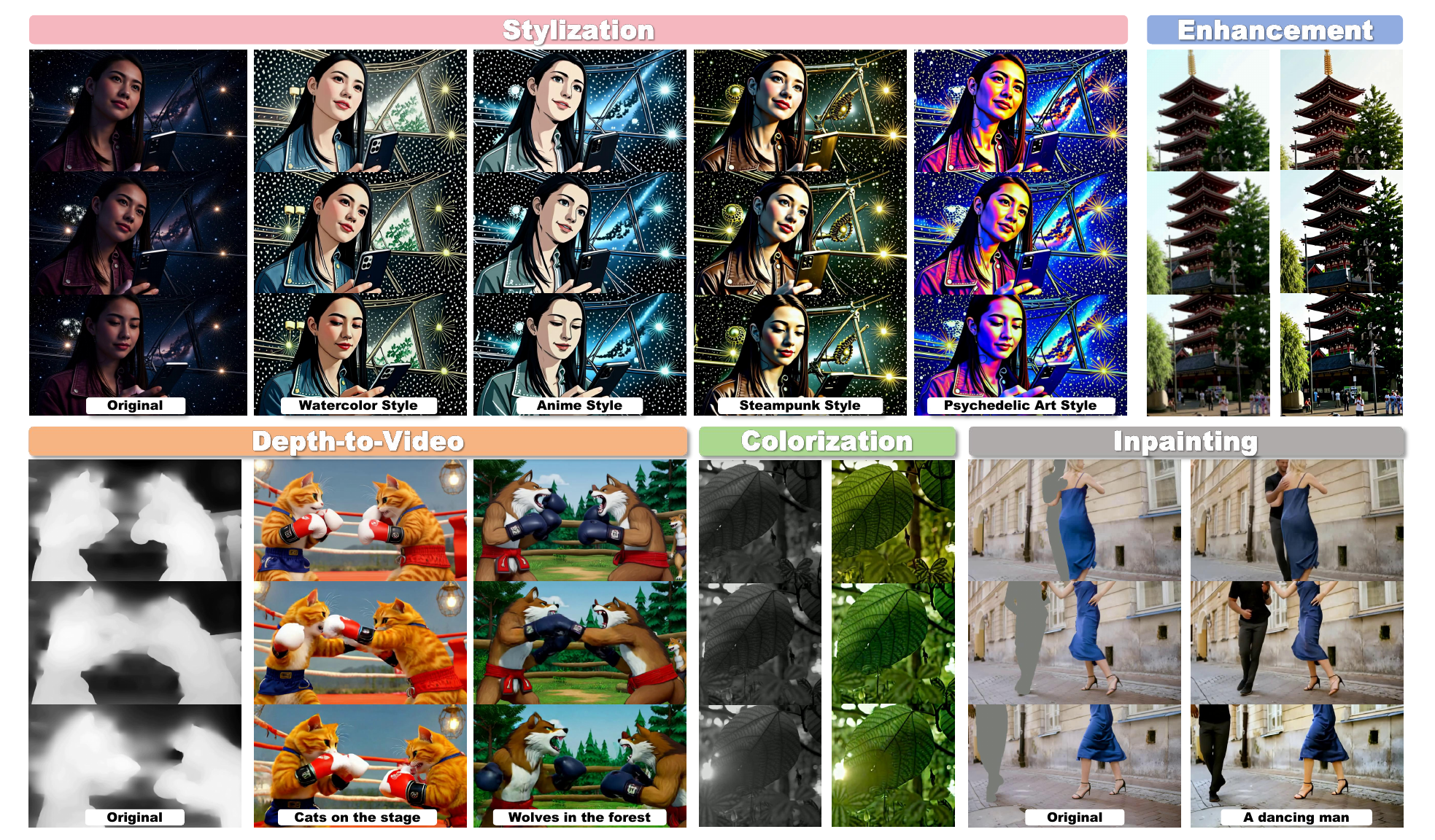}
    \caption{Visualization of the proposed SVEET. Our method can perform high-quality streaming video editing using only bidirectional model training. Here we demonstrate the results generated by SVEET on five video editing tasks, including stylization, enhancement, depth-to-video, colorization and inpainting.}
    \label{fig:teaser} 
\end{figure}

\begin{abstract}
In this paper, we propose SVEET, a framework that requires merely training on a pretrained bidirectional video diffusion model but supports high-quality streaming video editing in an auto-regressive fashion. 
To tackle this problem, we first systematically revisit existing video-to-video diffusion approaches and identify two key principles for such streaming adaptation: backbone feature disentanglement and conditional frame independence.  
Building on these insights, we develop a novel paradigm for controllable video generation. 
At its core, an auxiliary model branch encodes source video inputs with temporally independent self-attention, and the intermediate features are injected into the corresponding backbone blocks for streaming-compatible control. 
Moreover, to bridge the discrepancy between the feature spaces of bidirectional and streaming models, we propose a decoupled training scheme that explicitly enforces the orthogonality between the optimization directions of video controllability and model causality. 
Such disentanglement ensures compatibility between the two objectives at inference and facilitates smooth zero-shot knowledge transfer across heterogeneous backbone architectures. 
Extensive experiments demonstrate that SVEET achieves superior editing quality while maintaining real-time performance, attaining 15 FPS on a single H100 GPU without any auxiliary acceleration techniques. 
Codes are available \href{https://github.com/YujiaHu1109/SVEET}{here}.
\end{abstract}

\section{Introduction}
Driven by the growing demand for real-time and interactive applications, video diffusion models
\cite{blattmann2023stable,ho2022video,kong2024hunyuanvideo,liu2024sora,wan2025wan}
are increasingly moving from offline generation toward streaming settings
\cite{yang2025longlive,kodaira2025streamdiffusion,feng2025streamdiffusionv2},
where frames are generated causally to support low-latency applications such as interactive content creation and real-time avatar synthesis
\cite{mahmoud2025systematic,lu2025gas}.
Recent approaches \cite{yin2025slow,huang2025self,zhu2026causal} typically adapt pretrained bidirectional Diffusion Transformers (DiTs) into causal autoregressive models, often together with distillation for efficient sequential generation.

Despite this progress, DiT-based streaming video editing remains largely unexplored.
Existing video editing methods
\cite{kong2024hunyuanvideo,liu2024sora,wan2025wan}
rely on bidirectional full-sequence processing, which fundamentally violates streaming constraints so that making them incompatible with causal streaming applications such as live style transfer and online inpainting.
To bridge this gap, a straightforward solution is to train a streaming editing model from scratch. However, this is prohibitively expensive, as pretraining a competitive video diffusion backbone requires massive computational resources. 
According to recent works~\cite{yin2025causvid,huang2025self,zhu2026causal}, such streaming distill can take 128 H100 GPU days and involve synthesizing thousands of ODE pairs. 
Moreover, incorporating control signals further increases training cost.

Motivated by these inconveniences, in this paper, we are curious about one interesting question: \emph{Is it possible to achieve high-quality streaming video editing by training solely on a pretrained bidirectional model and directly transfer the learned editing capability to a streaming setting without any retraining?} 
To investigate this problem, we first systematically revisit existing video-to-video diffusion architectures and identify two fundamental principles for such streaming-compatible control: (1) \emph{Backbone feature disentanglement}, where the control mechanism must remain decoupled from the base model to preserve the encapsulated pretrained knowledge; and (2) \emph{Conditional frame independence}, where source video encoding must be performed in a per-frame basis without temporal dependency to ensure causal compatibility during streaming inference.

Guided by these insights, we propose \emph{SVEET}, a framework which enables \underline{s}treaming \underline{v}ideo \underline{e}diting with \underline{e}asy adap\underline{t}ion and answer the above question positively. 
At its core, SVEET introduces an auxiliary branch that encodes source video inputs using temporally independent self-attention and injects the resulting features into corresponding backbone blocks for streaming-compatible control. 
We verify that such an architecture adheres to the above guidelines and achieves plausible results in the zero-shot transfer from bidirectional to streaming settings. 

Nevertheless, a fundamental issue remains: the feature spaces of bidirectional and autoregressive models are not fully aligned, which hinders zero-shot transfer. 
Regarding this, we further propose a decoupled training scheme that explicitly enforces the orthogonality between optimization directions for video controllability and model causality. Specifically, we first use singular value decomposition (SVD)~\cite{stewart1993early,wall2003singular} to extract the dominant update directions capturing the discrepancy between autoregressive and bidirectional models and then constrain the control branch to optimize in the orthogonal subspace. 
We theoretically validate that such disentanglement facilitates the compatibility between the two objectives and enables effective zero-shot transfer across heterogeneous backbones. 

Extensive experiments across multiple editing tasks show that SVEET achieves strong editing quality while running at 15 FPS on a single H100 GPU without auxiliary acceleration.
Our contributions can be summarized as follows:
\begin{itemize}
    \item To the best of our knowledge, we are the first to study bidirectional-to-streaming video editing transfer without retraining or distilling the streaming backbone.

    \item We identify two principles for streaming-compatible transfer---backbone feature disentanglement and conditional frame independence---and develop SVEET, a temporally independent control architecture that directly transfers from a bidirectional to a streaming backbone.

    \item We introduce Orthogonal Decoupled Training (ODT) to mitigate the bidirectional-to-causal feature mismatch by decoupling controllability learning from causal dynamics. Extensive experiments demonstrate effective zero-shot transfer across diverse video editing tasks while enabling real-time streaming inference.
\end{itemize}

\section{Related Work}
\begin{figure}[t]
    \vspace{-0.3cm}
    \centering 
    \includegraphics[width=1.0\textwidth]{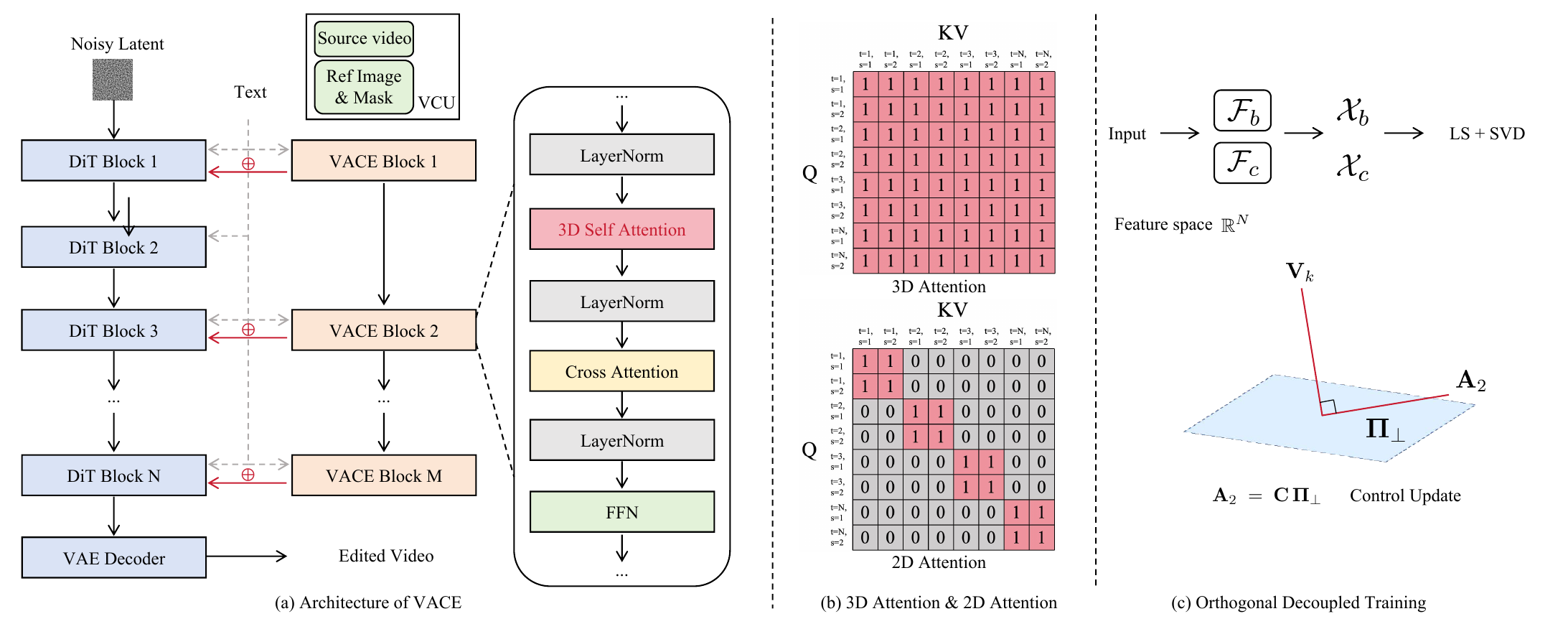}
    \vspace{-0.3cm}
    \caption{Our proposed SVEET. Our method builds upon VACE (a) and introduces a temporally independent 2D attention design in the control branch (b). To enable effective transfer from bidirectional to streaming backbones, we further incorporate an orthogonal decoupled training scheme (c) to bridge the feature discrepancy between the two settings.} 
    \label{fig:pipeline} 
    \vspace{-0.35cm}
\end{figure}

\subsection{Video Editing with Diffusion Models}
Recent video diffusion models
\cite{blattmann2023stable,ho2022video,kong2024hunyuanvideo,liu2024sora,wan2025wan}
support a broad range of editing tasks, including appearance editing
\cite{ding2023diffusionrig,bai2025uniedit},
structure-guided generation \cite{xing2024make},
and object-level manipulation \cite{ma2025magicstick}.
Early methods~\cite{lei2025animateanything,wu2023tuneavideooneshottuningimage,xu2023magicanimatetemporallyconsistenthuman,wang2024motionctrlunifiedflexiblemotion} extend image diffusion models~\cite{ho2020denoising,song2020denoising,rombach2022high} with temporal modules or attention propagation, while recent works~\cite{jiang2025vace,cheng2023consistent,bai2025scaling} increasingly adopt unified conditional architectures for flexible multi-task editing.
However, most existing approaches rely on bidirectional full-sequence processing, limiting their applicability to real-time streaming scenarios.

\subsection{Real-time Streaming Video Generation}
Recent real-time video generation methods mainly pursue two directions:
causal autoregressive generation
\cite{sun2019videobert,yan2021videogpt,singer2022make,villegas2022phenaki,henschel2025streamingt2v}
and inference acceleration through consistency or distillation
\cite{song2023consistency,luo2023latent,zheng2025large,yin2024one,salimans2022progressive,lu2025adversarial}.
Recent approaches~\cite{huang2025self,zhu2026causal,zhao2026causalforcingscalablefewstep} further combine autoregressive modeling with diffusion distillation to reduce the gap between training and inference and enable efficient streaming generation.
However, these methods primarily target video generation rather than video editing and often require costly adaptation or distillation.
In contrast, we study transferring editing capability learned on a bidirectional model directly to a pretrained streaming backbone without retraining it.

\section{Methods}
\label{gen_inst}

\subsection{Preliminaries}
\label{pre}
\noindent \textbf{Conditional Video Diffusion Architecture.}
Introducing a plug-and-play control module has become a common practice in the field of controllable generation. For instance, in video domain, Wan-VACE~\cite{jiang2025vace} extends Wan-T2V~\cite{wan2025wan} with a separate control branch for unified video editing.
As shown in Fig.~\ref{fig:pipeline}(a), the branch encodes multimodal conditions through a Video Condition Unit (VCU) and injects the resulting features additively into intermediate blocks of a frozen DiT backbone, which enables unified controllable video generation and editing without compromising the pretrained generative prior.

\noindent \textbf{Autoregressive Video Diffusion Pipeline.}
Unlike bidirectional models that jointly denoise all frames, autoregressive (AR) video diffusion models generate latent chunks
$\mathbf{z}=[\mathbf{z}_1,\ldots,\mathbf{z}_K]$ causally:
\begin{equation}
p_\theta(\mathbf{z}\mid\mathbf{c})
=
\prod_{k=1}^{K}
p_\theta(\mathbf{z}_k\mid\mathbf{z}_{<k},\mathbf{c}),
\label{eq:ar_factorization}
\end{equation}
where $\mathbf{c}$ denotes the text condition.
During inference, each chunk is denoised using the accumulated key/value cache of preceding chunks:
\begin{equation}
\mathbf{z}_k^{(t-1)} \;=\; \mathcal{F}_\theta\!\left(\mathbf{z}_k^{(t)},\, \mathbf{KV}_{<k},\, \mathbf{c},\, t\right), \qquad t = T, T{-}1, \dots, 1,
\label{eq:ar_inference}
\end{equation}
which is then updated for subsequent generation.

\begin{figure}[t]
    \vspace{-0.3cm}
    \centering 
    \includegraphics[width=1.0\textwidth]{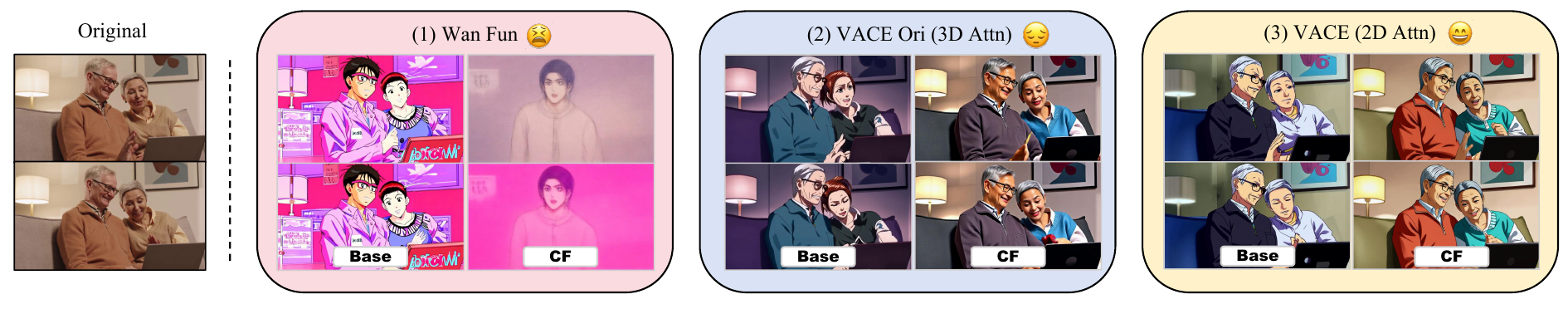}
    \vspace{-0.3cm}
    \caption{Performance of three representative structures during bidirectional-to-stream transfer.} 
    \label{fig:analysis} 
    \vspace{-0.25cm}
\end{figure}

\noindent \textbf{Bridging Bidirectional and Streaming Video Diffusion.} 
To investigate how video editing capability transfers from bidirectional diffusion models to streaming autoregressive backbones, we conduct comparative experiments on three representative architectures: (1) Wan-Fun, which fine-tunes backbone parameters during training; (2) VACE with full spatiotemporal attention; and (3) VACE with temporally independent 2D attention by transferring their learned editing capability from a bidirectional to a streaming backbone.

As shown in Fig.~\ref{fig:analysis}, VACE-Fun suffers severe degradation after transfer, almost completely failing to transfer, full spatiotemporal VACE retains only partial editing capability with unstable temporal behavior, whereas the 2D-attention variant transfers substantially more reliably.

These observations suggest two key principles for bridging bidirectional and streaming video diffusion:
(P1) Backbone feature disentanglement: control learning should remain decoupled from the bidirectional backbone;
and
(P2) Conditional frame independence: source-video conditioning should avoid cross-frame dependencies that conflict with causal chunk-wise inference.
These principles motivate the design of SVEET.

\subsection{SVEET}

\noindent \textbf{Problem Statement.}
We consider two pretrained video diffusion backbones: a bidirectional model $\mathcal{F}_b$, which jointly denoises all video frames, and a causal streaming model $\mathcal{F}_c$, which generates videos sequentially in a chunk-wise autoregressive manner as introduced in Sec.~\ref{pre}.
Our goal is to learn video editing capability on the frozen $\mathcal{F}_b$
and directly transfer it to the frozen $\mathcal{F}_c$ without retraining or
fine-tuning the streaming backbone.

This transfer presents two challenges: the conditioning pathway must remain compatible with causal inference, while the feature spaces of the two backbones are not perfectly aligned.
Accordingly, SVEET consists of two complementary components:
(i) a temporally independent control branch for causal-compatible conditioning,
and (ii) Orthogonal Decoupled Training (ODT) for mitigating the
bidirectional-to-causal feature discrepancy.

\noindent \textbf{Temporally Independent Control.} 
\label{Temporally}
We build the control pathway upon a VACE-like conditional video diffusion backbone, which already satisfies backbone feature disentanglement. Motivated by the conditional frame independence principle identified in our analysis, we further modify the control branch by replacing the original full spatiotemporal self-attention with temporally independent 2D spatial attention to ensure frame-wise independent encoding, as shown in Fig.~\ref{fig:pipeline}(b).
Concretely, denoting each token by its frame index $t$ and spatial index $s$, the attention mask is defined as
\begin{equation}
M_{(t,s),(t',s')} \;=\;
\begin{cases}
1, & \text{if } t = t'; \\
0, & \text{otherwise},
\end{cases}
\label{eq:frame_mask}
\end{equation}
such that each query attends only to tokens from the same source frame, reducing the attention to a per-frame 2D spatial self-attention. Consequently, the conditioning representation of frame $t$ depends only on that frame, while temporal coherence of the generated video remains modeled by
the causal streaming backbone through its autoregressive history. As a direct consequence, the control branch requires no key/value cache during streaming inference, avoiding introducing any additional cache-growth overhead on top of the AR backbone in Eq.~\ref{eq:ar_inference}.

\noindent \textbf{Orthogonal Decoupled Training.} Although the control branch is trained on a bidirectional diffusion backbone $\mathcal{F}_b$, it is deployed on a causal streaming backbone $\mathcal{F}_c$ with different parameters and feature distributions. This mismatch leads to degraded transfer of learned control signals when directly applied in streaming settings. We address this issue by explicitly decoupling controllability learning from the bidirectional-to-causal feature discrepancy.

We first probe the feature gap in a data-driven way. Given a small calibration set of video--prompt pairs, we feed identical inputs to $\mathcal{F}_b$ and $\mathcal{F}_c$ and record the per-block hidden states $\mathbf{X}_b^{(\ell)}, \mathbf{X}_c^{(\ell)} \in \mathbb{R}^{n \times N}$, where $N$ is the total number of cached tokens at block $\ell$. A closed-form linear map between the two feature streams is obtained via ridge regression, which corresponds to a regularized least-squares estimation:

\begin{equation}
\mathbf{W}_1^{(\ell)} =
\arg\min_{\mathbf{W}} \left\| \mathbf{W}\mathbf{X}_b^{(\ell)} - \mathbf{X}_c^{(\ell)} \right\|_F^2
+ \lambda \|\mathbf{W}\|_F^2,
\label{eq:lsq}
\end{equation}
where $\lambda$ is the ridge regularization coefficient. Taking the bidirectional feature itself as the reference, i.e., $\mathbf{W}_b = \mathbf{I}$, the residual transformation:
\begin{equation}
\mathbf{A}_1^{(\ell)} \;=\; \mathbf{W}_1^{(\ell)} - \mathbf{I} \;\in\; \mathbb{R}^{n\times n}
\label{eq:a1}
\end{equation}
captures, in feature space, the \emph{update direction} that turns a bidirectional representation into its causal counterpart at block $\ell$.
To localize the directions along which $\mathbf{A}_1^{(\ell)}$ acts most strongly, we apply a thin singular value decomposition
\begin{equation}
\mathbf{A}_1^{(\ell)} \;=\; \mathbf{U}^{(\ell)}\,\boldsymbol{\Sigma}^{(\ell)}\,\mathbf{V}^{(\ell)\top},
\label{eq:svd}
\end{equation}
and retain the top-$k$ right singular vectors $\mathbf{V}_k^{(\ell)} \in \mathbb{R}^{n\times k}$ associated with the $k$ largest singular values. 
The columns of $\mathbf{V}_k^{(\ell)}$ form an orthonormal basis of the \emph{discrepancy subspace}, capturing the dominant directions of the bidirectional-to-causal feature shift. We empirically verify that the discrepancy energy is concentrated in a compact set of singular directions across DiT layers in Appendix~\ref{appendix:feature_analysis}.
The orthogonal projector onto its complement is
\begin{equation}
\boldsymbol{\Pi}_{\perp}^{(\ell)} \;=\; \mathbf{I} - \mathbf{V}_k^{(\ell)}\mathbf{V}_k^{(\ell)\top},
\label{eq:proj}
\end{equation}
which satisfies $\boldsymbol{\Pi}_{\perp}^{(\ell)}\mathbf{V}_k^{(\ell)} = \mathbf{0}$ by construction. 

At each block $\ell$, the control branch introduces a learnable update
$\mathbf{A}_2^{(\ell)}$ to the backbone feature transformation.
To prevent this update from relying on the dominant
bidirectional-to-causal discrepancy directions, we constrain it as
\begin{equation}
\mathbf{A}_2^{(\ell)}
=
\mathbf{C}^{(\ell)}
\boldsymbol{\Pi}_{\perp}^{(\ell)},
\qquad
\mathbf{W}_2^{(\ell)}
=
\mathbf{W}_b^{(\ell)}
+
\mathbf{A}_2^{(\ell)},
\label{eq:reparam}
\end{equation}
where $\mathbf{C}^{(\ell)}$ denotes the unconstrained trainable update, which is parameterized in practice by LoRA adapters~\cite{hu2021loralowrankadaptationlarge}, while $\boldsymbol{\Pi}_{\perp}^{(\ell)}$ is precomputed and frozen throughout training.

This design constrains the effective control update to the orthogonal
complement of the dominant discrepancy subspace, thereby reducing its overlap
with the principal feature directions associated with the
bidirectional-to-causal transition, as illustrated in
Fig.~\ref{fig:pipeline}(c).
Please refer to the next section for the theoretical insights behind this
strategy.

\section{Theoretical Analysis}\label{sec:theory}

In this section, we provide theoretical insights for the proposed orthogonal decoupled training strategy. 
Our analysis takes inspiration from recent study on model merging~\cite{cheng2025whoever} and composable diffusion models~\cite{liu2022compositional}. 

\begin{theorem}[Informal Version: Feature-Level Decoupling]
Let \(\Delta W_1\) and \(\Delta W_2\) denote two functionality-specific parameter updates. Suppose the dominant singular subspace of \(\Delta W_1\) is removed from \(\Delta W_2\) via
\begin{equation}
\widetilde{\Delta W}_2
=
\Delta W_2(I-P_k),
\end{equation}
where \(P_k\) projects onto the top singular directions of \(\Delta W_1\).

Then the interference induced by the second functionality on the first functionality is upper bounded by the residual tail energy outside the dominant singular subspace of \(\Delta W_1\). In particular, if the first functionality is approximately low-rank, the induced interference becomes negligible.

Meanwhile, the capacity reduction of the second functionality depends only on its overlap with the removed singular subspace.

\end{theorem}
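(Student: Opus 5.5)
The plan is to make the informal statement precise in a linear feature model and then bound each claimed quantity with elementary SVD facts. Write the thin SVD \(\Delta W_1 = U\Sigma V^\top = \sum_i \sigma_i u_i v_i^\top\), with \(\sigma_1\ge\sigma_2\ge\cdots\). Set \(P_k = V_kV_k^\top\), the projector onto the top-\(k\) right singular directions, as in Eq.~\ref{eq:proj}. The model functionality is \(W = W_b + \Delta W_1\), \(\Delta W_1\) plays the role of \(\mathbf{A}_1^{(\ell)}\), and \(\widetilde{\Delta W}_2 = \Delta W_2(I-P_k)\) plays the role of \(\mathbf{A}_2^{(\ell)} = \mathbf{C}^{(\ell)}\boldsymbol{\Pi}_\perp^{(\ell)}\).

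\textbf{Formalizing interference.} I will measure the interference of functionality 2 on functionality 1 by the cross term that appears when both updates are composed. Concretely, I consider the response of \(\widetilde{\Delta W}_2\) to inputs lying in the range of \(\Delta W_1\)'s action, namely \(\|\widetilde{\Delta W}_2\,\Delta W_1^{\top}\|_F\) or \(\langle \widetilde{\Delta W}_2, \Delta W_1\rangle_F\). The inner-product version is the one used in the model-merging analysis of~\cite{cheng2025whoever}, and I will state the bound for it first.

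\textbf{Interference bound.} Since \(\Delta W_1(I-P_k) = \sum_{i>k}\sigma_i u_i v_i^\top =: R_k\), the tail, we get
\[
\langle \widetilde{\Delta W}_2,\Delta W_1\rangle_F = \operatorname{tr}\bigl((I-P_k)\Delta W_2^\top \Delta W_1\bigr) = \langle \Delta W_2, R_k\rangle_F,
\]
using idempotence and symmetry of \(I-P_k\). Cauchy--Schwarz then gives
\[
|\langle \widetilde{\Delta W}_2,\Delta W_1\rangle_F| \le \|\Delta W_2\|_F \Bigl(\sum_{i>k}\sigma_i^2\Bigr)^{1/2}.
\]
Similarly,
\[
\|\widetilde{\Delta W}_2 \Delta W_1^\top\|_F \le \|\Delta W_2\|_2\,\|R_k\|_F .
\]
If \(\Delta W_1\) is \(\epsilon\)-approximately rank \(k\), meaning the tail energy is at most \(\epsilon\) times \(\|\Delta W_1\|_F^2\), the interference is \(O(\sqrt{\epsilon})\), which is negligible. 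This gives the first claim.

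\textbf{Capacity reduction.} The loss for functionality 2 is
\[
\Delta W_2 - \widetilde{\Delta W}_2 = \Delta W_2 P_k = \Delta W_2 V_kV_k^\top,
\]
so \(\|\Delta W_2 - \widetilde{\Delta W}_2\|_F^2 = \|\Delta W_2 V_k\|_F^2 = \sum_{j\le k}\|\Delta W_2 v_j\|^2\). This depends only on the overlap of \(\Delta W_2\) with the removed subspace. By Pythagoras,
\[
\|\widetilde{\Delta W}_2\|_F^2 = \|\Delta W_2\|_F^2 - \|\Delta W_2V_k\|_F^2 .
\]
If the task loss \(L_2\) is \(\beta\)-smooth, the excess loss is at most \(\beta\) times this overlap, up to the first-order term, which vanishes at a stationary point. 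This gives the second claim.

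\textbf{Main obstacle.} The hard step is justifying that the Frobenius inner product, or the cross term, really upper bounds functional interference on the deployed network. The feature map passes through nonlinear blocks, and the control branch is injected additively rather than composed linearly. I would first handle this by a first-order Taylor expansion of the block output around \(W_b\). That expansion yields interference of the form \(J\,\widetilde{\Delta W}_2 x\) with \(x\) concentrated in the span of \(V_k\), plus tail components. I would then bound it by \(\|J\|_2\) times the quantities above, plus a second-order remainder of size \(O(\|\Delta W_1\|\,\|\Delta W_2\|)\). This is acceptable given the informal nature of the statement.
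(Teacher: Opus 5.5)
Your argument is correct for the formalization you choose. On the interference half it takes a genuinely different route from the paper; the capacity half is the same.

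\textbf{Interference.} The paper works in the feature model $Y=XW$ and assumes functionality~1 only reads a subspace $\mathrm{Im}(P)$, i.e.\ $XP\Delta W_1=X\Delta W_1$. It defines interference as the feature-level quantity $E_1=XP\widetilde{\Delta W}_2$ and claims $\|E_1\|_2\le\sigma_{k+1}(XP)\,\|\Delta W_2\|_2$ via an Eckart--Young step. You stay in parameter space and use the exact identity $\Delta W_1(I-P_k)=R_k$. As a result, both $\langle\widetilde{\Delta W}_2,\Delta W_1\rangle_F=\langle\Delta W_2,R_k\rangle_F$ and $\widetilde{\Delta W}_2\Delta W_1^\top=\Delta W_2R_k^\top$ are controlled by the tail of $\Delta W_1$ itself.

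\textbf{Capacity.} Both proofs use the same Pythagorean argument via $P_k(I-P_k)=0$. The paper adds only the one-line bound $\|X\Delta W_2P_k\|_F\le\|X\|_2\|\Delta W_2P_k\|_F$. Your smoothness remark goes beyond the paper.

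\textbf{What the paper's route buys, and where it fails.} It is meant to give a data-dependent, feature-level bound, but its key step does not hold as written.
\begin{itemize}
\item The equality $\|XP(I-P_k)\|_2=\sigma_{k+1}(XP)$ needs $d=m$ and needs $P_k$ to be the top right-singular projector of $XP$. Here $P_k$ comes from $\Delta W_1$, so Courant--Fischer only gives $\ge$.
\item Rewriting $XP\Delta W_2(I-P_k)$ as $XP(I-P_k)\Delta W_2(I-P_k)$ inserts a projector that is not there.
\end{itemize}
A concrete failure: take $X=I_2$, $\Delta W_1=P=e_1e_1^\top$, $k=1$, $\Delta W_2=e_1e_2^\top$. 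Then $\|E_1\|_2=1$ while $\sigma_2(XP)=0$, yet both of your overlap measures vanish.

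So your version is the one that actually proves ``bounded by the residual tail energy of $\Delta W_1$'', matching the informal wording. Its price is that it is data-free. In the paper's row convention it lifts to features immediately:
\[
\langle X\widetilde{\Delta W}_2,X\Delta W_1\rangle_F=\langle X\Delta W_2,XR_k\rangle_F\le\|X\|_2^2\,\|\Delta W_2\|_F\,\|R_k\|_F .
\]

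\textbf{Your ``main obstacle'' paragraph.} It is unnecessary here, because the paper's formal version of this statement is purely linear; nonlinearity is handled in its second theorem. As sketched, it would also not close. A cross remainder of size $O(\|\Delta W_1\|\,\|\Delta W_2\|)$ is no smaller than the tail-controlled term $\|R_k\|\,\|\Delta W_2\|$ you want to isolate. That is why the paper needs an extra diagonal-Hessian assumption on exactly that cross term. There too, it is your ordering $\widetilde{\Delta W}_2\Delta W_1^\top=\Delta W_2R_k^\top$, not $\Delta W_1^\top\widetilde{\Delta W}_2$, that inherits the tail, giving $\|\Delta W_2R_k^\top\|_F\le\sigma_{k+1}(\Delta W_1)\,\|\Delta W_2\|_F$.
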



\begin{theorem}[Informal Version: Orthogonality Improves Output Additivity]
Consider a nonlinear model \(f(W)\) and two functionality-specific updates \(\Delta W_1\) and \(\widetilde{\Delta W}_2\). The deviation from ideal additive composition is measured by
\begin{equation}
f(W+\Delta W_1+\widetilde{\Delta W}_2)
-
f(W+\Delta W_1)
-
f(W+\widetilde{\Delta W}_2)
+
f(W).
\end{equation}

If the dominant singular subspace of \(\Delta W_1\) is removed from \(\Delta W_2\), then the higher-order interaction between the two functionalities becomes bounded by the residual tail singular values of \(\Delta W_1\).

Consequently, orthogonal subspace projection suppresses nonlinear coupling between functionalities and improves output compositionality.

\end{theorem}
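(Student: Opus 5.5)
We make the statement precise by Taylor-expanding $f$ to second order around $W$. Assume $f$ is twice continuously differentiable, with Hessian bilinear form $D^2 f$ bounded by $L$ on a neighborhood containing all four points. The additivity defect
\begin{equation}
\mathcal{R} \;=\; f(W+\Delta W_1+\widetilde{\Delta W}_2)-f(W+\Delta W_1)-f(W+\widetilde{\Delta W}_2)+f(W)
\end{equation}
is a mixed second difference, so it can be written exactly as
\begin{equation}
\mathcal{R}=\int_0^1\!\!\int_0^1 D^2 f\bigl(W+s\Delta W_1+t\widetilde{\Delta W}_2\bigr)\bigl[\Delta W_1,\widetilde{\Delta W}_2\bigr]\,ds\,dt .
\end{equation}
All first-order (additive) terms cancel. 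What remains is the interaction between the two updates through the Hessian.

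The key structural step is to make this bilinear interaction sensitive to the relative geometry of the two matrices, not just to their norms. We will take $f$ to be a composition of layers whose parameters act linearly on features, $W\mapsto Wx$, with the paper's right-multiplication convention. The mixed Hessian term then factors through products such as $\Delta W_1 x$ and $\widetilde{\Delta W}_2 x$ inside one layer, or $\Delta W_1 \widetilde{\Delta W}_2$ and $\widetilde{\Delta W}_2^{\top}\Delta W_1$ across stacked linear maps. We will assume a structured bound of the form
\begin{equation}
\bigl|D^2 f(\cdot)[A,B]\bigr| \;\le\; L\,\bigl\|B A^{\top}\bigr\|_F ,
\end{equation}
i.e.\ the interaction is controlled by the overlap of the row spaces of $A$ and $B$. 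This is the working hypothesis that turns "orthogonality" into a quantitative statement, and it is the main obstacle. A generic Hessian bound only gives $L\|\Delta W_1\|\|\widetilde{\Delta W}_2\|$, which cannot see orthogonality at all. I would first verify the hypothesis for a single linear layer followed by a smooth activation, where the cross term is $\sigma''(\cdot)\odot(\Delta W_1 x)\odot(\widetilde{\Delta W}_2 x)$. Averaging over inputs $x$ with approximately isotropic second moment then yields $\operatorname{tr}(\widetilde{\Delta W}_2\,\Delta W_1^{\top})$-type quantities. I would then state the general case as an assumption.

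Given this bound, the computation is short. Write the SVD $\Delta W_1=\sum_i\sigma_i u_i v_i^{\top}$ and $P_k=V_kV_k^{\top}$, so that $\Delta W_1=\Delta W_1P_k+\Delta W_1(I-P_k)$ and $\Delta W_1(I-P_k)=\sum_{i>k}\sigma_i u_iv_i^{\top}$. Then
\begin{equation}
\widetilde{\Delta W}_2\,\Delta W_1^{\top}=\Delta W_2(I-P_k)\,(I-P_k)\,\Delta W_1^{\top}=\Delta W_2\,(I-P_k)\Delta W_1^{\top},
\end{equation}
because the $P_k$ part is annihilated. Hence
\begin{equation}
|\mathcal{R}|\;\le\; L\,\|\Delta W_2\|_{2}\,\Bigl(\sum_{i>k}\sigma_i^2\Bigr)^{1/2}.
\end{equation}
This is exactly the claim that the higher-order interaction is bounded by the residual tail singular values of $\Delta W_1$. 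If $\Delta W_1$ is approximately rank-$k$, the tail is small and $\mathcal{R}$ is negligible.

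We finish by connecting back to the first informal theorem. The same tail quantity already controls feature-level interference, which is essentially the case where $f$ is linear plus this second-order correction. The cost of the projection is $\|\Delta W_2-\widetilde{\Delta W}_2\|_F=\|\Delta W_2P_k\|_F$, which depends only on the overlap of $\Delta W_2$ with the removed subspace. This gives the stated trade-off: improved compositionality at a capacity cost governed by that overlap.
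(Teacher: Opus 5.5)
Your argument has the same skeleton as the paper's proof: reduce the additivity defect to a cross-Hessian term, impose a structural hypothesis on the Hessian, and finish with an SVD tail estimate. It differs at exactly the two points where the paper is loose, and in both places your version is the sounder one.

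\textbf{The remainder.} Your double-integral identity gives a non-asymptotic bound, whereas the paper's Taylor expansion controls the cross term only up to $o(\|\Delta W_1\|_F\|\widetilde{\Delta W}_2\|_F)$. The cost is that your structural bound must hold on the whole parallelogram $\{W+s\Delta W_1+t\widetilde{\Delta W}_2: s,t\in[0,1]\}$, not just near the four corners. The paper only needs information at $W$.

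\textbf{The structural hypothesis.} The paper assumes $H_W$ is diagonal and then uses $\|\Delta W_1^\top\widetilde{\Delta W}_2\|_F\le\sigma_{k+1}(\Delta W_1)\|\Delta W_2\|_F$. Neither link holds as written. A diagonal Hessian gives only the entrywise pairing $\sum_{ij}h_{ij}(\Delta W_1)_{ij}(\widetilde{\Delta W}_2)_{ij}$. That pairing is blind to the row orthogonality the projection creates unless $h_{ij}$ is constant along rows. The column-side product $\Delta W_1^\top\widetilde{\Delta W}_2=V\Sigma U^\top\Delta W_2(I-P_k)$ is also untouched by the projection. For $\Delta W_1=e_1e_1^\top$, $\Delta W_2=e_1e_2^\top$, $k=1$, it equals $e_1e_2^\top$, while $\sigma_2(\Delta W_1)=0$.

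Your hypothesis is phrased through $\widetilde{\Delta W}_2\Delta W_1^\top=\Delta W_2\bigl(\Delta W_1(I-P_k)\bigr)^\top$. That is precisely what the right-sided projection kills, so your tail bound is valid. Bounding it instead by $\|\Delta W_2\|_F\,\|(I-P_k)\Delta W_1^\top\|_2$ also recovers the paper's form $L\,\sigma_{k+1}(\Delta W_1)\|\Delta W_2\|_F$.

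\textbf{What each approach buys.} The paper's hypothesis is local and tied to an intuition about compositional diffusion models, but it does not deliver the stated inequality. Yours does deliver it, but the hypothesis carries essentially all the content by fiat. The cross-layer products you cite as motivation, $\Delta W_1\widetilde{\Delta W}_2$ and $\widetilde{\Delta W}_2^\top\Delta W_1$, are \emph{not} killed by the projection (same example). So the general case must exclude them by assumption rather than derive the bound from them.

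\textbf{A weaker hypothesis suffices.} Even $D^2f[A,B]=c\,\langle A,B\rangle$ satisfies your Frobenius-norm bound only with $L$ of order $|c|\sqrt{\min(d,m)}$. Your isotropic single-layer check likewise only produces a trace $\operatorname{tr}(D\,\Delta W_1\widetilde{\Delta W}_2^\top)$, and only after decoupling the weights $\sigma''(Wx)$ from $x$. Stating the hypothesis with the nuclear norm, $|D^2f[A,B]|\le L\|BA^\top\|_*$, fixes this. Hölder's inequality $\|XY\|_*\le\|X\|_F\|Y\|_F$ still yields the pure tail bound $L\|\Delta W_2\|_F\bigl(\sum_{i>k}\sigma_i^2\bigr)^{1/2}$.
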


\noindent
The formal statement and proof are provided in
Appendix~\ref{appendix:theory}.

Intuitively, the above analysis suggests that enforcing orthogonality between the optimization directions of video controllability and model causality preserves the feature components required by both objectives, thereby yielding a bounded output error due to the inherent compositionality of diffusion model outputs. 

\section{Experiments}
\label{headings}

\subsection{Experimental Settings}
\noindent \textbf{Training Settings.} 
We implement SVEET with Wan2.1-1.3B-VACE~\cite{jiang2025vace} as the bidirectional backbone and chunk-wise Causal Forcing~\cite{zhu2026causal} as the default streaming backbone.
The VACE control branch is optimized using LoRA adapters~\cite{hu2021loralowrankadaptationlarge} with rank 128, with all pretrained backbone parameters kept frozen. 
We adopt the AdamW optimizer \cite{loshchilov2017decoupled} with a learning rate of $1\times10^{-4}$ and weight decay of $10^{-2}$. 
For ODT, the layer-wise projector $\Pi_\perp$ retains $80\%$ of the cumulative spectral energy at each DiT block.
Each task is trained for 10 epochs on a single NVIDIA A100 (80GB) with batch size 1, using 81-frame video clips at $480\times832$ resolution.
We mainly evaluate three representative editing tasks: style transfer, video inpainting, and depth-to-video generation. Further training details are provided in Appendix~\ref{appendix:training}.

\noindent \textbf{Dataset Setup.} 
For training, we construct task-specific datasets by sampling videos from existing datasets. Specifically, style transfer data is sampled from Ditto \cite{bai2025scaling}, while videos for inpainting and depth-to-video generation are sampled from VPData \cite{bian2025videopainter}. For the depth-to-video task, depth conditions are further extracted using Video-Depth-Anything \cite{chen2025video}. Each task contains approximately 6K--13K training video samples.
For evaluation, we construct held-out test sets for each task. Specifically, we randomly sample 120 video pairs from Ditto \cite{bai2025scaling} for style transfer, and 80 samples from VPData \cite{bian2025videopainter} for both inpainting and depth-to-video tasks. For depth-to-video evaluation, depth maps are generated using the same depth estimation pipeline.
More dataset details are provided in Appendix~\ref{appendix:dataset}.

\noindent \textbf{Baselines.}  
We compare our method against two categories of baselines. For existing approaches, we consider (i) SDEdit+CF, a training-free baseline that adapts SDEdit~\cite{meng2021sdedit} to Causal Forcing \cite{zhu2026causal}, (ii) StreamDiffusionV2 ~\cite{feng2025streamdiffusionv2}, (iii) Daydream+CF~\cite{fosdick2026adaptingvacerealtimeautoregressive}, (iv) LiveEdit~\cite{wang2026liveedit}. 
As open-source real-time video editing models remain scarce, we further construct two controlled baselines that share our overall pipeline but differ in key design choices: (v) Full-Attn, which retains full spatiotemporal attention at control branch, and (vi) Channel-Concat Control, which replaces the control branch with channel-wise concatenation of source and noisy latents.

\begin{table*}[t]
\vspace{-0.1cm}
\centering
\fontsize{8pt}{8.4pt}\selectfont
\renewcommand\arraystretch{1.05}
\setlength{\tabcolsep}{0.45pt}  

\caption{Quantitative results on three video editing tasks.}
\label{tab:full_quantitative_results}

\begin{subtable}{\textwidth}
\centering
\caption{Video Style Transfer}
\vspace{-0.15cm}
\label{tab:style_transfer}
\begin{tabular}{l|c|c|cccccc}
\toprule
\multirow{2}{*}{Method} & VLM & Text Alignment & \multicolumn{6}{c}{VBench Evaluation} \\
\cmidrule(lr){2-2} \cmidrule(lr){3-3} \cmidrule(lr){4-9}

& \makecell{Editing\\Accuracy $\uparrow$}
& \makecell{CLIP\\$\uparrow$}
& \makecell{Subject\\Consistency $\uparrow$}
& \makecell{Background\\Consistency $\uparrow$}
& \makecell{Temporal\\Flickering $\uparrow$}
& \makecell{Motion\\Smoothness $\uparrow$}
& \makecell{Aesthetic\\Quality $\uparrow$} 
& \makecell{Overall\\Consistency $\uparrow$}\\
\midrule
SDEdit+CF & 3.1328 & 0.1412 & 0.8674 & 0.8949 & 0.9734 & 0.9840 & 0.5134 & 0.0916 \\
SDV2 & 4.2297 & 0.1833 & 0.9011 & \underline{0.9238} & \textbf{0.9857} & 0.9798 & 0.4650 & 0.1055 \\
DayDream+CF      & 5.2321 & 0.2276 & 0.9147 & 0.9203 & 0.9775 & 0.9882 & 0.5049 & 0.0872 \\
LiveEdit       & 5.8672 & \underline{0.2279} & 0.9276 & 0.9219 & \underline{0.9832} & \underline{0.9892} & 0.5289 & 0.1098 \\
\midrule
3D VACE attn       & \underline{7.0653} & 0.2105 & \underline{0.9313} & 0.9175 & 0.9592 & 0.9809 & \underline{0.5335} & 0.1046 \\
Channel concat        & 1.8902 & 0.1414 & 0.7259 & 0.8848 & 0.9703 & 0.9763 & 0.3655 & \textbf{0.1229} \\
\midrule
Ours            & \textbf{7.4322} & \textbf{0.2287} & \textbf{0.9447} & \textbf{0.9298} & 0.9819 & \textbf{0.9898} & \textbf{0.5550} & \underline{0.1123} \\
\bottomrule
\end{tabular}
\end{subtable}


\begin{subtable}{\textwidth}
\centering
\vspace{+0.1cm}
\caption{Video Inpainting}
\vspace{-0.1cm}
\label{tab:content_editing}
\begin{tabular}{l|c|c|cccccc}
\toprule
\multirow{2}{*}{Method} & VLM & Text Alignment & \multicolumn{6}{c}{VBench Evaluation} \\
\cmidrule(lr){2-2} \cmidrule(lr){3-3} \cmidrule(lr){4-9}

& \makecell{Editing\\Accuracy $\uparrow$}
& \makecell{CLIP\\$\uparrow$}
& \makecell{Subject\\Consistency $\uparrow$}
& \makecell{Background\\Consistency $\uparrow$}
& \makecell{Temporal\\Flickering $\uparrow$}
& \makecell{Motion\\Smoothness $\uparrow$}
& \makecell{Aesthetic\\Quality $\uparrow$} 
& \makecell{Overall\\Consistency $\uparrow$}\\
\midrule
SDEdit+CF & 6.7137 & 0.2816 & 0.9422 & 0.9363 & 0.9863 & 0.9909 & \underline{0.5496} & 0.0928 \\
SDV2 & 6.8580 & 0.2654 & 0.9609 & \textbf{0.9551} & 0.9836 & \textbf{0.9945} & 0.4867 & \underline{0.0964} \\
DayDream+CF      & 7.0115 & 0.2807 & 0.9579 & 0.9420 & \underline{0.9880} & 0.9935 & 0.5434 & 0.0955 \\
LiveEdit        & 7.7833 & 0.2803 & \underline{0.9632} & 0.9476 & 0.9873 & 0.9933 & \textbf{0.5498} & 0.0962 \\
\midrule
3D VACE attn       & \underline{8.0583} & \underline{0.2882} & 0.9554 & 0.9426 & 0.9866 & 0.9933 & 0.5252 & 0.0947 \\
Channel concat        & 6.8913 & 0.2690 & 0.9475 & 0.9369 & 0.9840 & 0.9933 & 0.4935 & 0.0926 \\
\midrule
Ours            & \textbf{8.4667} & \textbf{0.2895} & \textbf{0.9653} & \underline{0.9478} & \textbf{0.9886} & \underline{0.9936} & 0.5297 & \textbf{0.1014} \\
\bottomrule
\end{tabular}
\end{subtable}


\begin{subtable}{\textwidth}
\centering
\vspace{+0.1cm}
\caption{Depth-to-Video}
\vspace{-0.1cm}
\label{tab:video_generation}
\begin{tabular}{l|c|c|cccccc}
\toprule
\multirow{2}{*}{Method} & VLM & Text Alignment & \multicolumn{6}{c}{VBench Evaluation} \\
\cmidrule(lr){2-2} \cmidrule(lr){3-3} \cmidrule(lr){4-9}

& \makecell{Editing\\Accuracy $\uparrow$}
& \makecell{CLIP\\$\uparrow$}
& \makecell{Subject\\Consistency $\uparrow$}
& \makecell{Background\\Consistency $\uparrow$}
& \makecell{Temporal\\Flickering $\uparrow$}
& \makecell{Motion\\Smoothness $\uparrow$}
& \makecell{Aesthetic\\Quality $\uparrow$} 
& \makecell{Overall\\Consistency $\uparrow$}\\
\midrule
SDEdit+CF & 4.3673 & 0.2658 & 0.9239 & 0.9208 & 0.9920 & \underline{0.9943} & 0.4854 & \underline{0.0929} \\
SDV2 & 5.0505 & 0.2275 & 0.9411 & 0.9513 & \textbf{0.9961} & \textbf{0.9961} & 0.4606 & \textbf{0.1030} \\
DayDream+CF      & \underline{7.1502} & 0.2892 & 0.9332 & 0.9239 & 0.9818 & 0.9884 & 0.5096 & 0.0811 \\
LiveEdit        & 5.1892 & 0.2896 & 0.9613 & 0.9507 & \underline{0.9958} & 0.9842 & 0.5273 & 0.0809 \\
\midrule
3D VACE attn       & 5.7484 & \underline{0.2907} & \underline{0.9720} & \underline{0.9517} & 0.9855 & 0.9919 & \underline{0.5466} & 0.0793 \\
Channel concat        & 6.0830 & 0.2892 & 0.9332 & 0.9239 & 0.9818 & 0.9884 & 0.5096 & 0.0811 \\
\midrule
Ours            & \textbf{8.4083} & \textbf{0.2928} & \textbf{0.9727} & \textbf{0.9521} & 0.9892 & 0.9882 & \textbf{0.5511} & 0.0813 \\
\bottomrule
\end{tabular}
\end{subtable}
\vspace{-0.4cm} 

\end{table*}

\noindent \textbf{Evaluation Metrics.}
We evaluate edited videos mainly from three perspectives. (1) VLM Assessment: we use GPT-4o~\cite{hurst2024gpt} to assess editing quality, including faithfulness to the instruction and visual coherence, and report average scores on a 1–10 Likert scale; (2) Text alignment: we assess this using CLIP-T~\cite{radford2021learning}, computed as the cosine similarity between the editing prompt and sampled video frames encoded by CLIP ViT-L/14; (3) VBench Evaluation: we evaluate video quality with six VBench~\cite{huang2024vbench} metrics relevant to our tasks including subject consistency, background consistency, temporal flickering, motion smoothness, aesthetic quality and overall consistency. In addition, we conduct a user study involving 10  participants, who are asked to score 15 generated videos in terms of editing correctness, structural preservation, and motion smoothness.
\subsection{Main Results}
\noindent \textbf{Quantitative Comparison.}
Tab.~\ref{tab:full_quantitative_results} presents quantitative comparisons across the three editing tasks on our test sets. 
Overall, our method achieves the best average performance across most metrics and tasks, with particularly strong gains in VLM scores, where it consistently outperforms all baselines by a clear margin. Tab.~\ref{tab:user} further reports the results of our human evaluation, showing that videos generated by ours are consistently preferred by participants over existing methods.

\begin{figure}[t]
    \vspace{-0.3cm}
    \centering 

    \includegraphics[width=1.0\textwidth]{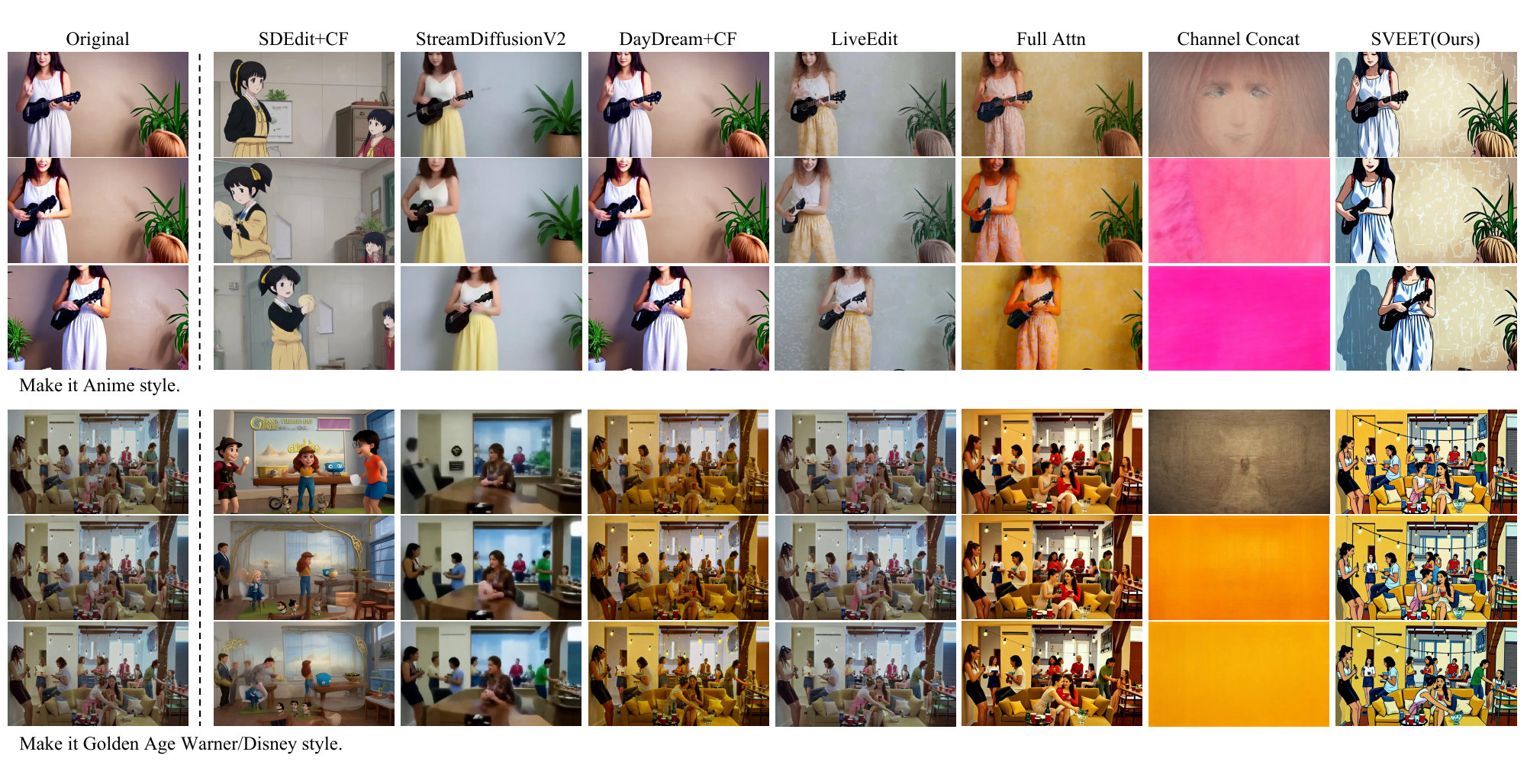}
    \vspace{-0.5cm}
    \caption{Qualitative comparison results on stylization.}
    \label{fig:stylization}

    \vspace{0.1cm}

    \includegraphics[width=1.0\textwidth]{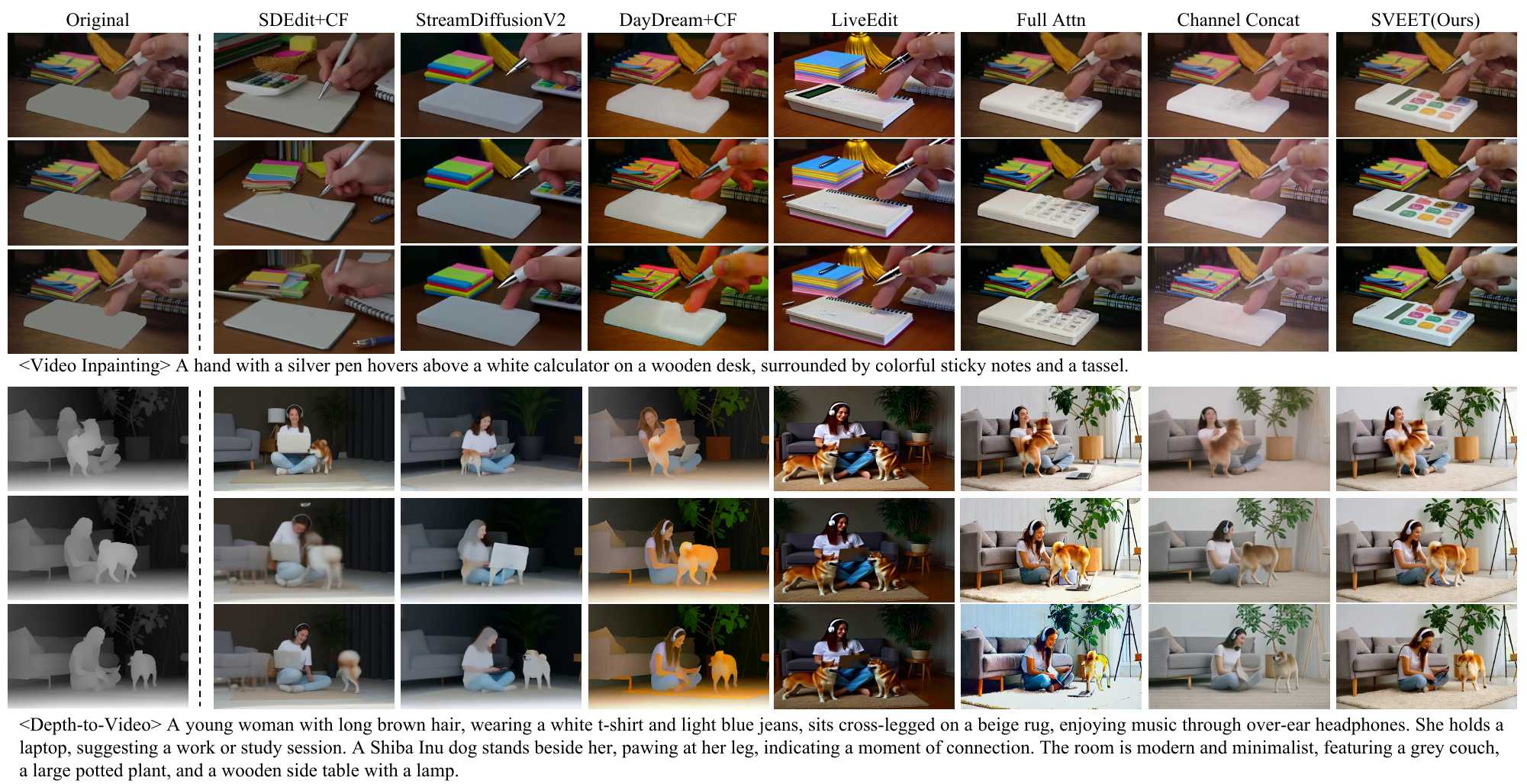}
    \vspace{-0.3cm}
    \caption{Qualitative comparison results on inpainting and depth-to-video generation.}
    \label{fig:compare}
    
    \vspace{-0.15cm}
\end{figure}

\noindent \textbf{Qualitative Comparison.}
We present qualitative results in Fig.~\ref{fig:stylization} and Fig.~\ref{fig:compare}. As shown in Fig.~\ref{fig:stylization}, our method achieves the best overall performance on the stylization task, consistently preserving source structure, faithfully transferring the target style, and maintaining strong temporal consistency. In contrast, SDEdit+CF~\cite{meng2021sdedit} and StreamDiffusionV2~\cite{feng2025streamdiffusionv2} significantly alter the original video content, while Channel Concat often fails to produce meaningful results. Daydream+CF and LiveEdit~\cite{fosdick2026adaptingvacerealtimeautoregressive} struggle to generate the target style, and Full-Attn baseline shows weaker style intensity and inferior local detail alignment compared to our method.
As shown in Fig.~\ref{fig:compare}, on inpainting and depth-to-video tasks, our method also produces higher-quality outputs with noticeably better temporal stability. Baselines such as SDEdit+CF, StreamDiffusionV2, and Daydream+CF exhibit weaker generation quality, while LiveEdit, Channel Concat and Full-Attn variants consistently underperform in terms of visual fidelity and overall consistency. Additional transfer results on other streaming backbones are provided in Appendix~\ref{appendix:results2}.

\subsection{Ablation Studies}
\noindent \textbf{Ablations on transfer strategies.}
We first compare four controlled variants to disentangle the effects of
temporal attention design and ODT: 3D/2D attention with or without ODT.

\noindent
\begin{minipage}[t]{0.57\textwidth}
\vspace{-0.3cm}
As shown in Tab.~\ref{tab:ablation} and Fig.~\ref{fig:ablation_vis},
2D attention improves source preservation and streaming stability, while ODT
further reduces transfer-induced appearance drift.
Their combination achieves the best overall trade-off, supporting the
complementarity of temporal independence and feature-space decoupling.
We also compare two alternative transfer strategies. Inference-stage projection where post-hoc projection using the estimated mapping matrix causes substantial structural drift, while two-stage teacher forcing improves stability but still degrades local content and requires additional adaptation of the streaming backbone.

\end{minipage}
\hfill
\begin{minipage}[t]{0.40\textwidth}
\vspace{-0.2cm}
\centering
\footnotesize
\renewcommand{\arraystretch}{1.05}
\setlength{\tabcolsep}{3pt}

\captionof{table}{Quantitative ablation results.}
\label{tab:ablation}

\begin{tabular}{lcc}
\toprule
Settings &
\makecell{VLM \\ $\uparrow$} &
\makecell{Motion \\ Smoothness $\uparrow$} \\
\midrule
3D w/o ODT & 7.0653 & 0.9809 \\
3D w/ ODT  & 7.3315 & 0.9856 \\
2D w/o ODT & 7.1898 & 0.9830 \\
2D w/ ODT  & \textbf{7.4322} & \textbf{0.9898} \\
\midrule
Infer-stage Proj & 6.4250 & 0.9848 \\
Two-stage TF     & 7.0927 & 0.9879 \\
\bottomrule
\end{tabular}

\end{minipage}

\begin{figure}[H]
    \centering
    \includegraphics[width=\linewidth]{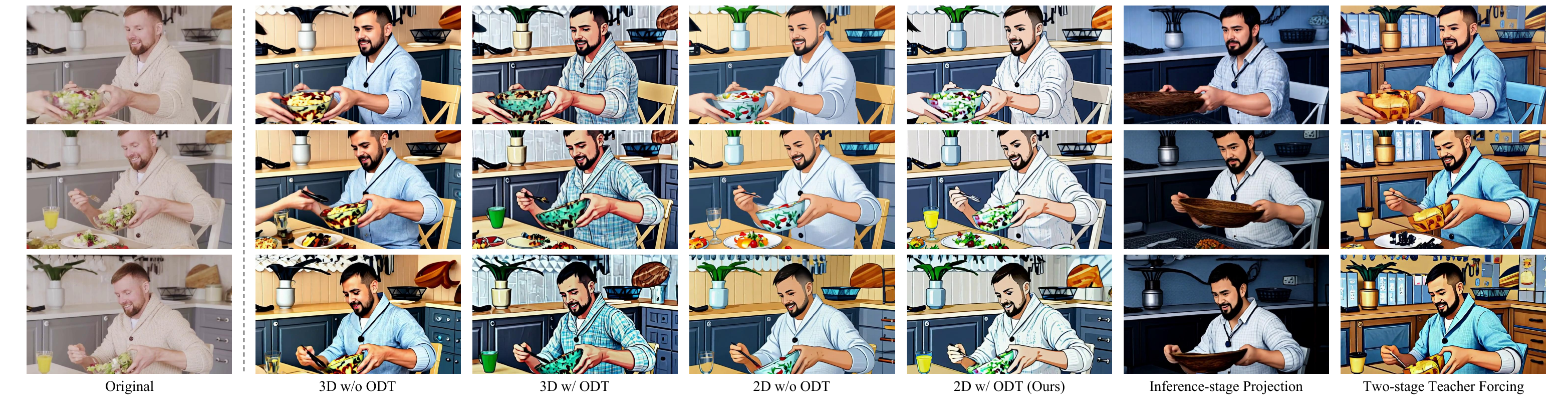}
    \caption{Visual comparison of different transfer strategies.}
    \label{fig:ablation_vis}
    \vspace{-0.2cm}
\end{figure}

\begin{figure}[H]
\centering
\begin{minipage}[ht]{0.58\textwidth}
    \centering
    \small
    \renewcommand\arraystretch{1}
    \setlength{\tabcolsep}{0.5pt}
    \vspace{-0.1cm}
    \captionof{table}{User study.}
    \label{tab:user}
    \begin{tabular}{lccc}
    \toprule
    Method & \makecell{Editing \\ Correctness $\uparrow$} & \makecell{Structural \\ Preservation $\uparrow$} & \makecell{Overall \\ Smoothness $\uparrow$} \\
    \midrule
    SDEdit+CF & 5.3867 & 2.8133 & 4.8467 \\
    SDV2 & 5.3267 & 2.9133 & 6.5800 \\
    DayDream+CF & 7.8533 & 8.6200 & 7.9533 \\
    \midrule
    Ours & \textbf{9.0800} & \textbf{9.2067} & \textbf{8.1133} \\
    \bottomrule
    \end{tabular}
    \vspace{-0.1cm}

\end{minipage}
\vspace{-0.2cm}
\hfill
\begin{minipage}[ht]{0.4\textwidth}
    \centering
    \vspace{-0.2cm}
    \includegraphics[width=\linewidth]{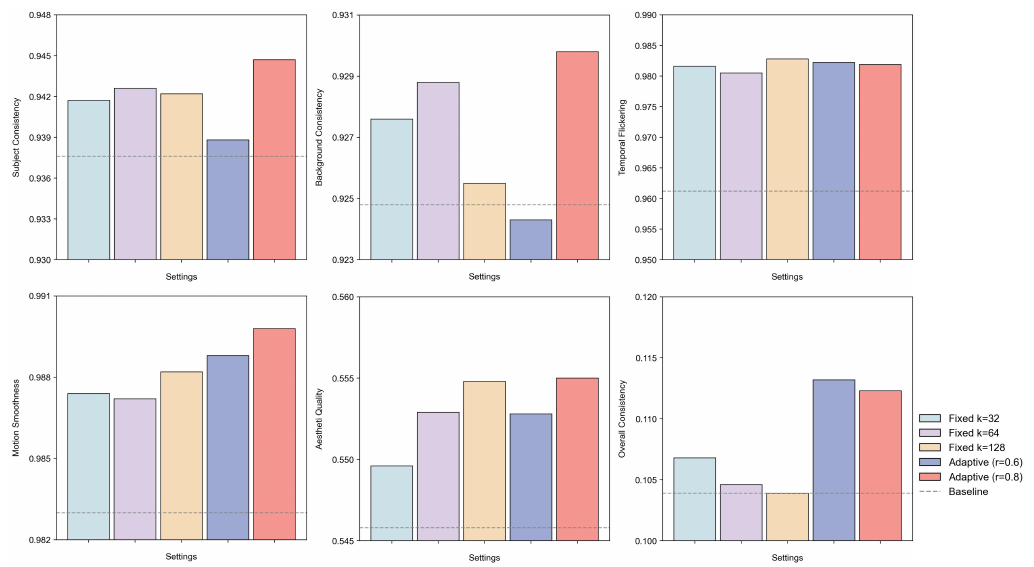}
    \vspace{-0.53cm}
    \caption{Effect of different rank choice.}
    \label{fig:k}
\end{minipage}
\vspace{-0.1cm}
\end{figure}
\noindent \textbf{Choice of subspace rank.}
We further study the effect of the subspace rank $k$ used in the orthogonal projection. Since the discrepancy between bidirectional and streaming backbones varies across different DiT layers, using a fixed rank for all layers is suboptimal. Therefore, we adopt an adaptive strategy that determines $k$ according to cumulative spectral energy. Here we compare fixed-rank settings ($k = 32, 64, 128$) with adaptive variants that retain a fixed proportion of energy ($r = 60\%, 80 \%$).
As shown in Fig. \ref{fig:k}, fixed-rank settings generally underperform adaptive strategies, with small fixed ranks yield only marginal improvements over direct transfer. In contrast, adaptive rank selection leads to stronger overall gains, with $r=80\%$ achieving comparatively better performance across most metrics.

\section{Conclusions}
\label{others}
We present SVEET, a high-quality streaming video editing framework trained solely on a pretrained bidirectional video diffusion model. We identify the key principles enabling bidirectional-to-streaming transfer without backbone retraining, and based on these insights, propose a temporally independent control branch together with an orthogonal decoupled training scheme to bridge the feature gap between bidirectional and autoregressive models for zero-shot streaming transfer. Extensive experiments demonstrate that SVEET achieves strong editing quality while maintaining real-time performance, providing a simple and effective paradigm for bridging offline and streaming video generation without costly retraining or distillation. In terms of limitations, our method still relies on the capability of the underlying bidirectional editing model, and extending this transfer paradigm to broader editing tasks and more heterogeneous backbones remains an direction for future work.

\bibliography{iclr2027_conference}
\bibliographystyle{iclr2027_conference}

\appendix
\clearpage

\section{Empirical Verification of the Assumption}
\label{appendix:feature_analysis}

We further examine the feature-space assumptions underlying ODT.
Fig.~\ref{fig:feature_analysis} shows the cumulative singular-value energy of the bidirectional-to-causal residual transformation
$\mathbf{A}_1^{(\ell)}$ at representative DiT blocks.
The spectra exhibit clear concentration: only a relatively small fraction of singular directions is required to capture most of the discrepancy energy.
As summarized in Fig.~\ref{fig:feature_analysis}(b), approximately
$4\%$--$21\%$ of the directions are sufficient to explain $80\%$ of the spectral energy across different layers.
This observation supports our modeling of the bidirectional-to-causal shift with a compact dominant discrepancy subspace, while also motivating the layer-adaptive rank selection used in ODT.

\begin{figure}[H]
    \centering
    \includegraphics[width=\linewidth]{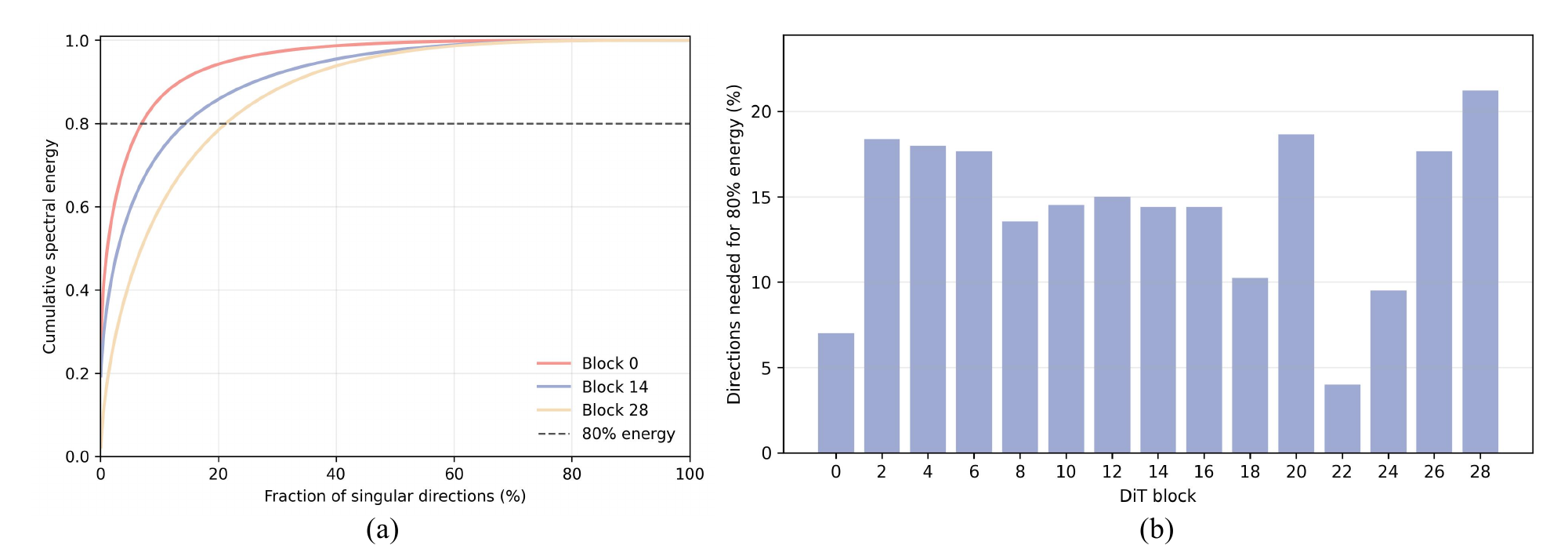}
    \caption{Empirical verification of the assumption.}
    \label{fig:feature_analysis}
    \vspace{-0.2cm}
\end{figure}

\setcounter{theorem}{0}

\section{Theoretical Derivation}\label{appendix:theory}
We present the full theoretical derivations corresponding to Sec.~\ref{sec:theory}, including the problem setup, assumptions, and proofs, to establish the validity of the proposed orthogonal decoupled training method. 

\paragraph{Setup.}
Consider a linear transformation
\begin{equation}
Y = XW,
\end{equation}
where \(X\in\mathbb{R}^{n\times d}\) denotes the input feature matrix and \(W\in\mathbb{R}^{d\times m}\) denotes the model parameter.

Suppose two functionalities are represented by two parameter updates
\begin{equation}
\Delta W_1,\quad \Delta W_2.
\end{equation}

To reduce interference between the two functionalities, we compute the truncated SVD of \(\Delta W_1\):
\begin{equation}
\Delta W_1 = U\Sigma V^\top,
\end{equation}
and define \(V_k\in\mathbb{R}^{d\times k}\) as the matrix containing the top-\(k\) right singular vectors. We further define the orthogonal projector
\begin{equation}
P_k = V_kV_k^\top.
\end{equation}

The second update is constrained to lie in the orthogonal complement of the dominant subspace of \(\Delta W_1\):
\begin{equation}
\widetilde{\Delta W}_2
=
\Delta W_2(I-P_k).
\end{equation}

\paragraph{Assumption.}
Assume there exists a projection matrix \(P\) such that the first functionality depends only on the subspace \(\mathrm{Im}(P)\), i.e.,
\begin{equation}
X = XP + X(I-P),
\end{equation}
and
\begin{equation}
XP\Delta W_1 = X\Delta W_1.
\end{equation}

Equivalently,
\begin{equation}
X(I-P)\Delta W_1 = 0.
\end{equation}

\paragraph{Interference Definition.}
After jointly applying the two updates,
\begin{equation}
W'
=
W + \Delta W_1 + \widetilde{\Delta W}_2,
\end{equation}
we define the induced interference on functionality~1 as
\begin{equation}
E_1
=
XP\widetilde{\Delta W}_2,
\end{equation}
and the removed component of functionality~2 caused by the orthogonality constraint as
\begin{equation}
E_2
=
X\Delta W_2P_k.
\end{equation}

\begin{theorem}[Orthogonal Subspace Decoupling]
Under the above setup, the following statements hold.

\textbf{(1) Functional Interference Bound.}
The interference induced on functionality~1 satisfies
\begin{equation}
\|E_1\|_2
\le
\sigma_{k+1}(XP)\,
\|\Delta W_2\|_2,
\end{equation}
where \(\sigma_{k+1}(XP)\) denotes the \((k+1)\)-th singular value of \(XP\).

\vspace{0.5em}

\textbf{(2) Capacity Loss Bound.}
The removed component of functionality~2 satisfies
\begin{equation}
\|E_2\|_F^2
\le
\|X\|_2^2
\|\Delta W_2P_k\|_F^2.
\end{equation}

Moreover,
\begin{equation}
\|\widetilde{\Delta W}_2\|_F^2
=
\|\Delta W_2\|_F^2
-
\|\Delta W_2P_k\|_F^2.
\end{equation}

Therefore, the removed energy is exactly characterized by the overlap between \(\Delta W_2\) and the dominant singular subspace of \(\Delta W_1\).
\end{theorem}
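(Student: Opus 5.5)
The plan is to treat the two parts separately. Part (2) is pure linear algebra about the orthogonal projector $P_k = V_kV_k^\top$. Part (1) needs a spectral argument, plus one identification that the statement leaves implicit.

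\textbf{Part (2).} For the capacity-loss bound I would write $E_2 = X(\Delta W_2P_k)$ and apply the mixed submultiplicative inequality $\|AB\|_F \le \|A\|_2\|B\|_F$, which gives $\|E_2\|_F^2 \le \|X\|_2^2\|\Delta W_2P_k\|_F^2$. This inequality follows by applying the operator-norm bound column by column to $B$ and summing the squares.

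For the energy identity I would split $\Delta W_2 = \Delta W_2P_k + \Delta W_2(I-P_k)$. Using $\|A+B\|_F^2 = \|A\|_F^2 + \|B\|_F^2 + 2\,\mathrm{tr}(A^\top B)$, the cross term is $\mathrm{tr}\bigl(P_k\Delta W_2^\top\Delta W_2(I-P_k)\bigr) = \mathrm{tr}\bigl(\Delta W_2^\top\Delta W_2(I-P_k)P_k\bigr)$ by cyclicity. This vanishes because $P_k$ is symmetric and idempotent, so $(I-P_k)P_k = 0$. The Pythagorean identity $\|\Delta W_2\|_F^2 = \|\Delta W_2P_k\|_F^2 + \|\widetilde{\Delta W}_2\|_F^2$ follows. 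The closing sentence about the removed energy is then just a restatement.

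\textbf{Part (1).} I would first make explicit the identification that the setup leaves implicit. In the paper's feature-space setting, $\mathbf{A}_1^{(\ell)}$ is square ($n\times n$), so input and output coordinates live in the same space. Under the Assumption, functionality~1 acts only through $XP$. I would therefore take the top-$k$ right singular directions $V_k$ of $\Delta W_1$ to coincide with the top-$k$ right singular directions of $XP$. This formalizes the idea that the discrepancy directions are exactly the feature directions that functionality~1 uses.

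With that identification, the projection can be moved so that it acts on the feature side. Then $E_1 = XP(I-P_k)\Delta W_2$, up to this identification, and
\[
\|E_1\|_2 \le \|XP(I-P_k)\|_2\,\|\Delta W_2\|_2 .
\]
It remains to show $\|XP(I-P_k)\|_2 = \sigma_{k+1}(XP)$. I would prove this from the SVD $XP = \sum_i \sigma_i u_iv_i^\top$. Right-multiplying by $I-P_k$, where $P_k = \sum_{i\le k} v_iv_i^\top$, deletes the first $k$ terms, leaving a matrix whose largest singular value is $\sigma_{k+1}$. This is the Eckart--Young / Courant--Fischer characterization.

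\textbf{Main obstacle.} As literally written, $E_1 = XP\,\Delta W_2(I-P_k)$ places the projector on the output side of $\Delta W_2$. Without an alignment assumption it cannot be commuted past $\Delta W_2$, and the dimensions of $V_k$ and $P$ need not even match. I expect the proof to stand or fall on stating this alignment cleanly. One option is to assume $\Delta W_2$ commutes with $P_k$. The other, which I would try first, is to define the constrained update on the input side, consistent with Eq.~\ref{eq:reparam} where $\boldsymbol{\Pi}_\perp$ multiplies the feature. Once that is in place, the bound is the one-line spectral computation above.
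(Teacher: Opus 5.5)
Your proposal follows the paper's proof. Part (1) uses $\|E_1\|_2\le\|XP(I-P_k)\|_2\,\|\Delta W_2\|_2$ together with the Eckart--Young identity $\|XP(I-P_k)\|_2=\sigma_{k+1}(XP)$. Part (2) uses $\|AB\|_F\le\|A\|_2\|B\|_F$ and the Pythagorean split from $P_k(I-P_k)=0$.

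The alignment you make explicit is the step the paper takes silently. It rewrites $XP\,\Delta W_2(I-P_k)$ as $XP(I-P_k)\Delta W_2(I-P_k)$, and it treats $P_k$ as the top-$k$ right singular projector of $XP$ rather than of $\Delta W_1$. That assumption is genuinely needed: with $P=I$, $X=\mathrm{diag}(10,1)$, $\Delta W_1=e_2e_2^\top$, $\Delta W_2=I$ and $k=1$ (standard basis vectors), the literal definitions give $\|E_1\|_2=10>1=\sigma_2(XP)\|\Delta W_2\|_2$. Stating it is therefore an improvement over the paper. If you take the input-side option, restate part (2) with $P_k\Delta W_2$ in place of $\Delta W_2P_k$; the same two arguments then go through unchanged.
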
\label{thm:1}

\begin{proof}
By definition,
\begin{equation}
E_1
=
XP\Delta W_2(I-P_k).
\end{equation}

Since \(P_k\) projects onto the top-\(k\) right singular subspace, the Eckart--Young theorem gives
\begin{equation}
\|XP(I-P_k)\|_2
=
\sigma_{k+1}(XP).
\end{equation}

Thus,
\begin{align}
\|E_1\|_2
&=
\|XP(I-P_k)\Delta W_2(I-P_k)\|_2
\\
&\le
\|XP(I-P_k)\|_2
\|\Delta W_2\|_2
\\
&=
\sigma_{k+1}(XP)\|\Delta W_2\|_2.
\end{align}

For the second part,
\begin{equation}
E_2
=
X\Delta W_2P_k.
\end{equation}

Applying the standard inequality
\begin{equation}
\|AB\|_F
\le
\|A\|_2\|B\|_F,
\end{equation}
we obtain
\begin{equation}
\|E_2\|_F^2
\le
\|X\|_2^2
\|\Delta W_2P_k\|_F^2.
\end{equation}

Finally, since
\begin{equation}
\widetilde{\Delta W}_2
=
\Delta W_2(I-P_k),
\end{equation}
and
\begin{equation}
P_k(I-P_k)=0,
\end{equation}
the Pythagorean decomposition gives
\begin{equation}
\|\Delta W_2\|_F^2
=
\|\Delta W_2P_k\|_F^2
+
\|\widetilde{\Delta W}_2\|_F^2.
\end{equation}

This completes the proof.
\end{proof}

\paragraph{Discussion.}
The theorem reveals a clear trade-off between compositionality and functional capacity. If the first functionality is approximately low-rank, then
\begin{equation}
\sigma_{k+1}(XP)\approx 0,
\end{equation}
implying that the interference induced by the second functionality is tightly bounded. Meanwhile, the capacity reduction of the second functionality depends only on its overlap with the dominant singular subspace of the first functionality. 

Intuitively, Theorem~\ref{thm:1} indicates that the compositional error in functionality 1 is upper-bounded by the singular values of the minor components, which are typically small in practice, while the capacity loss for functionality 2 is upper-bounded by those of the major components. 
Fortunately, the update components for functionality 2 are trainable in our setup. We can explicitly enforce the model to adapt to functionality 2 without affecting the major components of functionality 1, thereby ensuring minimal interference between the two. 

\paragraph{Output Compositionality Analysis.}

We further analyze how orthogonal subspace projection improves the additivity of model outputs under nonlinear transformations.

Let
\begin{equation}
f:\mathbb{R}^{d\times m}\rightarrow\mathbb{R}^p
\end{equation}
denote the nonlinear model mapping from the parameter space to the output space.

Consider two parameter updates
\begin{equation}
\Delta W_1,
\quad
\widetilde{\Delta W}_2
=
\Delta W_2(I-P_k),
\end{equation}
where \(P_k=V_kV_k^\top\) is the projector onto the dominant right-singular subspace of \(\Delta W_1\).

We define the compositionality error as
\begin{equation}
E_{\mathrm{comp}}
=
f(W+\Delta W_1+\widetilde{\Delta W}_2)
-
f(W+\Delta W_1)
-
f(W+\widetilde{\Delta W}_2)
+
f(W).
\end{equation}

This quantity measures the deviation from ideal additive composition.

\begin{theorem}[Orthogonality-Induced Compositionality]
Assume \(f\) is twice continuously differentiable in a neighborhood of \(W\), and let
\begin{equation}
H_W
=
\nabla_W^2 f(W)
\end{equation}
denote the Hessian operator at \(W\).

Then the compositionality error satisfies
\begin{equation}
E_{\mathrm{comp}}
=
\langle
\Delta W_1,
H_W\widetilde{\Delta W}_2
\rangle
+
o\!\left(
\|\Delta W_1\|_F
\|\widetilde{\Delta W}_2\|_F
\right).
\end{equation}

Furthermore, if the Hessian matrix $H_W$ is diagonal and 
\begin{equation}
\widetilde{\Delta W}_2
=
\Delta W_2(I-P_k),
\end{equation}
then
\begin{equation}
\|\Delta W_1^\top \widetilde{\Delta W}_2\|_F
\le
\sigma_{k+1}(\Delta W_1)
\,
\|\Delta W_2\|_F,
\end{equation}
which yields
\begin{equation}
\|E_{\mathrm{comp}}\|
\le
\|H_W\|_2
\,
\sigma_{k+1}(\Delta W_1)
\,
\|\Delta W_2\|_F
+
o(\cdot).
\end{equation}

Therefore, the compositionality error is upper bounded by the residual tail singular value outside the dominant singular subspace of the first functionality.
\end{theorem}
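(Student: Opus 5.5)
The plan has two parts, matching the two claims of the theorem. I treat $f$ as a map on the vectorised parameter $\mathrm{vec}(W)$, so that $H_W$ is a (tensor-valued) bilinear form on $\mathbb{R}^{d\times m}$.

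\textbf{Step 1: second-order expansion.} I define $g(s,t)=f(W+s\Delta W_1+t\widetilde{\Delta W}_2)$. Then
\[
E_{\mathrm{comp}}=g(1,1)-g(1,0)-g(0,1)+g(0,0)=\int_0^1\!\!\int_0^1 \partial_s\partial_t g(s,t)\,ds\,dt .
\]
Because $f\in C^2$ near $W$, the mixed partial equals $\langle \Delta W_1, H_{W+s\Delta W_1+t\widetilde{\Delta W}_2}\widetilde{\Delta W}_2\rangle$. Replacing $H$ at the shifted point by $H_W$ costs $\sup_{s,t}\|H_{W+\cdots}-H_W\|\,\|\Delta W_1\|_F\|\widetilde{\Delta W}_2\|_F$. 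The supremum tends to $0$ by continuity of the Hessian, which gives exactly the $o(\|\Delta W_1\|_F\|\widetilde{\Delta W}_2\|_F)$ term. This step is routine.

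\textbf{Step 2: reduce the bilinear term to the contraction $\Delta W_1^\top\widetilde{\Delta W}_2$.} I use the hypothesis that $H_W$ is diagonal in the coordinates of the statement, read as acting as a scalar multiple on the shared index that $\Delta W_1^\top\widetilde{\Delta W}_2$ contracts. Under this reading the leading term becomes a weighted trace $\mathrm{tr}(\Delta W_1^\top D\,\widetilde{\Delta W}_2)$, where $D$ is diagonal with $\|D\|_2\le\|H_W\|_2$. Cauchy--Schwarz in Frobenius form then gives
\[
|\langle\Delta W_1,H_W\widetilde{\Delta W}_2\rangle|\le\|H_W\|_2\,\|\Delta W_1^\top\widetilde{\Delta W}_2\|_F .
\]
This reduces the final inequality of the theorem to the middle one.

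\textbf{Step 3: the spectral inequality; this is the main obstacle.} I write $\Delta W_1=U\Sigma V^\top$ and $\widetilde{\Delta W}_2=\Delta W_2(I-P_k)$ with $P_k=V_kV_k^\top$. The goal is to move the factor $(I-P_k)$ so that it sits next to the right-singular side of $\Delta W_1$. Once that is done, $\|\Delta W_1(I-P_k)\|_2=\sigma_{k+1}(\Delta W_1)$ by Eckart--Young, and $\|AB\|_F\le\|A\|_2\|B\|_F$ together with $\|I-P_k\|_2\le 1$ yields $\sigma_{k+1}(\Delta W_1)\|\Delta W_2\|_F$.

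The difficulty is that in $\Delta W_1^\top\Delta W_2(I-P_k)$ the projector sits on the far side of $\Delta W_2$. So this adjacency requires that $P_k$ act on the index that $\Delta W_1^\top$ and $\widetilde{\Delta W}_2$ share, in the paper's convention $V_k\in\mathbb{R}^{d\times k}$. I would first try to obtain it from the same entrywise structure of the diagonal Hessian used in Step 2: the pairing only sees entrywise products, so the projection can be transferred across the pairing as $\langle\Delta W_1,D\,\Delta W_2(I-P_k)\rangle=\langle\Delta W_1(I-P_k),D\,\Delta W_2\rangle$. This transfer needs $D$ to commute with $I-P_k$. I expect this commutation to be the delicate point. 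If it fails in general, the bound holds only under the dimension convention of the setup, and I would state that dependence explicitly rather than claim it for an arbitrary diagonal $H_W$.

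Combining Steps 1--3 gives the displayed bound on $\|E_{\mathrm{comp}}\|$.
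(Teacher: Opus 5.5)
Your Step 1 is correct, and it is tighter than the paper's argument. The paper Taylor-expands the three terms separately, which leaves non-cancelling remainders of order $o(\|\Delta W_1\|_F^2+\|\widetilde{\Delta W}_2\|_F^2)$. Your mixed-difference integral gives the stated $o(\|\Delta W_1\|_F\|\widetilde{\Delta W}_2\|_F)$.

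The gap is in Steps 2--3, and no argument can close it, because the ``furthermore'' part of the statement is false as written. The paper's proof does not establish it either: it asserts the middle inequality and substitutes. The obstruction you flag in Step 3 is genuine, and the convention $d=m$ does not rescue it. Take $d=m=2$, $k=1$, $v=(1,1)^\top/\sqrt2$, $u=(1,-1)^\top/\sqrt2$, $\Delta W_1=e_1v^\top$, $\Delta W_2=e_1u^\top$. Then $\sigma_2(\Delta W_1)=0$, $P_1=vv^\top$ and $\widetilde{\Delta W}_2=\Delta W_2$. Also $\Delta W_1^\top\widetilde{\Delta W}_2=vu^\top$ has Frobenius norm $1>0$, so the middle inequality fails. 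Now take the quadratic $f(W)=\tfrac12(W_{11}^2-W_{12}^2)$. Its Hessian is diagonal with $\|H_W\|_2=1$, and $E_{\mathrm{comp}}$ equals the bilinear term exactly. This gives $E_{\mathrm{comp}}=1$ against a right-hand side of $0$, and rescaling both updates by $t$ shows no $o(\cdot)$ term absorbs this. So for an entrywise-diagonal Hessian the transfer of $I-P_k$ across the pairing fails, even when the tail singular value vanishes.

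Step 2 fails independently, even under your restricted reading. The inequality $|\mathrm{tr}(\Delta W_1^\top D\,\widetilde{\Delta W}_2)|\le\|D\|_2\|\Delta W_1^\top\widetilde{\Delta W}_2\|_F$ is not an instance of Cauchy--Schwarz. $D$ sits between the factors, and even for $D=I$ one only has $|\mathrm{tr}\,M|\le\sqrt{m}\,\|M\|_F$. Concretely, take $f(W)=\tfrac12\|W\|_F^2$ (so $H_W=I$), $d=m\ge3$, $k=1$, $\Delta W_1=\mathrm{diag}(1,\epsilon,\dots,\epsilon)$ with $0<\epsilon<1$, and $\Delta W_2=\mathrm{diag}(0,1,\dots,1)$. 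Then $\widetilde{\Delta W}_2=\Delta W_2$ and $E_{\mathrm{comp}}=\langle\Delta W_1,\Delta W_2\rangle=(m-1)\epsilon$. This exceeds $\|H_W\|_2\,\sigma_2(\Delta W_1)\|\Delta W_2\|_F=\epsilon\sqrt{m-1}$, even though here the middle inequality holds with equality.

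Your ingredients do prove a corrected version.
\begin{itemize}
\item Contracting over the column index instead, $\|\Delta W_1\widetilde{\Delta W}_2^\top\|_F=\|\Delta W_1(I-P_k)\Delta W_2^\top\|_F\le\sigma_{k+1}(\Delta W_1)\|\Delta W_2\|_F$.
\item Suppose $H_W$ acts as $B\mapsto DB$ with $D$ diagonal, so $\|D\|_2=\|H_W\|_2$; this includes $H_W=cI$. Then $I-P_k$ moves across the pairing for free, because left and right multiplication commute: $\langle\Delta W_1,D\Delta W_2(I-P_k)\rangle=\langle D\Delta W_1(I-P_k),\Delta W_2\rangle$.
\item This yields $|E_{\mathrm{comp}}|\le\|D\|_2\,\sigma_{k+1}(\Delta W_1)\,\|\Delta W_2\|_*+o(\cdot)$, with $\|\cdot\|_*$ the nuclear norm. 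Alternatively, $|E_{\mathrm{comp}}|\le\|D\|_2\bigl(\sum_{i>k}\sigma_i(\Delta W_1)^2\bigr)^{1/2}\|\Delta W_2\|_F+o(\cdot)$.
\end{itemize}
These are a nuclear-norm bound and a Frobenius-tail bound, but not $\sigma_{k+1}(\Delta W_1)\|\Delta W_2\|_F$.
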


\begin{proof}
Using the second-order Taylor expansion around \(W\),
\begin{align}
f(W+\Delta W)
&=
f(W)
+
\langle J_W,\Delta W\rangle
+
\frac12
\langle \Delta W,
H_W \Delta W
\rangle
+
o(\|\Delta W\|^2),
\end{align}
where \(J_W=\nabla_W f(W)\).

Substituting
\begin{equation}
\Delta W
=
\Delta W_1+\widetilde{\Delta W}_2,
\end{equation}
we obtain
\begin{align}
f(W+\Delta W_1+\widetilde{\Delta W}_2)
=
&\,
f(W)
+
\langle J_W,\Delta W_1\rangle
+
\langle J_W,\widetilde{\Delta W}_2\rangle
\\
&+
\frac12
\langle \Delta W_1,
H_W\Delta W_1
\rangle
\\
&+
\frac12
\langle \widetilde{\Delta W}_2,
H_W\widetilde{\Delta W}_2
\rangle
\\
&+
\langle \Delta W_1,
H_W\widetilde{\Delta W}_2
\rangle
+
o(\cdot).
\end{align}

Subtracting
\begin{equation}
f(W+\Delta W_1)
+
f(W+\widetilde{\Delta W}_2)
-
f(W),
\end{equation}
all first-order and self-quadratic terms cancel, yielding
\begin{equation}
E_{\mathrm{comp}}
=
\langle
\Delta W_1,
H_W\widetilde{\Delta W}_2
\rangle
+
o(\cdot).
\end{equation}

Finally, if $H_W$ is diagonal, since
\begin{equation}
\widetilde{\Delta W}_2
=
\Delta W_2(I-P_k),
\end{equation}
and \(P_k\) projects onto the dominant right-singular subspace of \(\Delta W_1\), the residual overlap satisfies
\begin{equation}
\|\Delta W_1^\top \widetilde{\Delta W}_2\|_F
\le
\sigma_{k+1}(\Delta W_1)
\,
\|\Delta W_2\|_F.
\end{equation}

Substituting this bound completes the proof.

\end{proof}

Intuitively, the Hessian matrix $H_W$ characterizes output interference between the two functionalities. 
According to recent studies~\cite{liu2022compositional}, large-scale diffusion models inherently support functional composition, indicating the potential diagonality of $H_W$. 
Moreover, in our practical setup, functionality 1 is responsible for diffusion timestep modulation, while functionality 2 handles spatial control, further suggesting their conceptual independence. 

\section{Additional implementation details}\label{appendix:implementation}
\subsection{Training Details}
\label{appendix:training}

\noindent \textbf{Model Configuration.}
We build SVEET upon Wan2.1-1.3B-VACE~\cite{jiang2025vace} as the
bidirectional editing backbone.
Unless otherwise specified, chunk-wise Causal Forcing~\cite{zhu2026causal}
is used as the default streaming backbone.
The pretrained parameters of both backbones remain frozen throughout training,
and only the VACE control branch is optimized.
Specifically, we insert rank-128 LoRA adapters
~\cite{hu2021loralowrankadaptationlarge} into the $q$, $k$, $v$, $o$,
$\mathrm{ffn}.0$, and $\mathrm{ffn}.2$ linear layers of the control branch.
The original spatiotemporal self-attention in the VACE branch is replaced with
frame-wise 2D spatial attention, such that condition features are extracted
independently for each source frame.

\noindent \textbf{Orthogonal Projector Construction.}
For Orthogonal Decoupled Training (ODT), we first construct a small calibration
set including over 500 samples and feed identical video--prompt pairs to the bidirectional and streaming backbones.
At each DiT block $\ell$, we collect the corresponding hidden features
$\mathbf{X}_b^{(\ell)}$ and $\mathbf{X}_c^{(\ell)}$ and estimate the
bidirectional-to-causal linear transformation using the ridge-regression
formulation described in Sec.~\ref{sec:theory}.
We then perform SVD on the residual transformation
$\mathbf{A}_1^{(\ell)}$.
The rank $k$ is selected independently for each layer such that the top-$k$
singular directions capture $80\%$ of the cumulative discrepancy energy.
The resulting orthogonal projector
$\boldsymbol{\Pi}_{\perp}^{(\ell)}$ is computed once before training and
remains fixed thereafter.
During optimization, the learnable LoRA update is projected onto this
orthogonal complement as described in Eq.~\ref{eq:reparam}.

\noindent \textbf{Training Protocol.}
All models are optimized using AdamW~\cite{loshchilov2017decoupled} with a
learning rate of $5\times10^{-5}$ and weight decay of $10^{-2}$.
We use a batch size of 1 and train each editing task for 10 epochs on a single
NVIDIA A100 GPU with 80GB memory.
Training videos contain 81 frames at a spatial resolution of
$480\times832$.
We train separate task-specific control branches for style transfer,
video inpainting, and depth-to-video generation, while keeping the underlying
bidirectional backbone unchanged.

\noindent \textbf{Streaming Transfer and Inference.}
After training, the learned control branch is directly attached to the frozen
streaming backbone without any additional fine-tuning or distillation.
No parameters of the streaming model are updated during this transfer.
Since the control branch uses temporally independent 2D attention, its
conditioning features can be computed frame by frame without introducing an
additional temporal KV cache.
For experiments on alternative streaming backbones, we keep the trained SVEET
control branch unchanged and replace only the streaming model used at
inference time; further results are reported in
Sec.~\ref{appendix:results1}.

\subsection{Dataset Details}
\label{appendix:dataset}

\begin{figure}[H]
    \centering
    \includegraphics[width=\linewidth]{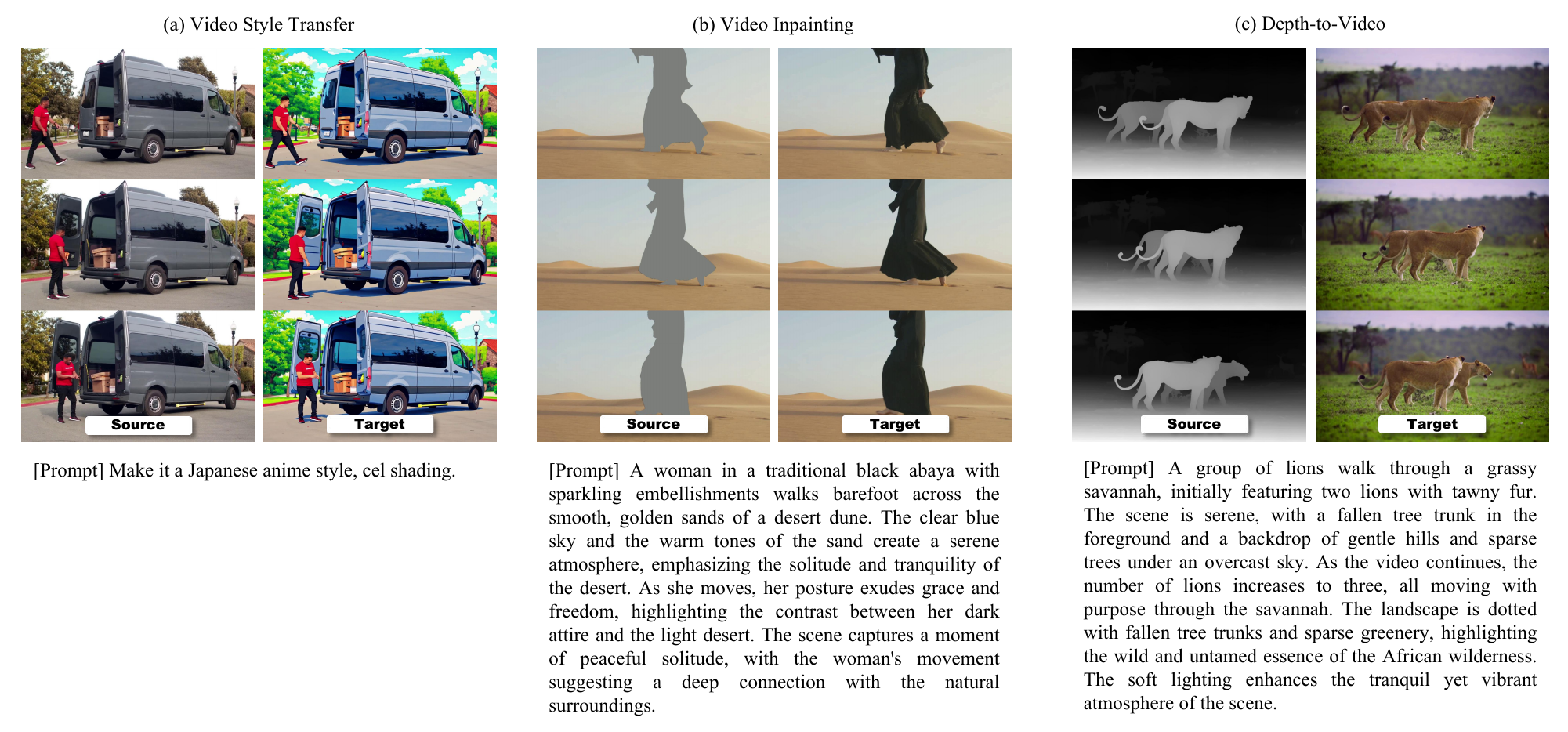}
    \caption{Representative training samples for three tasks.}
    \label{fig:dataset_examples}
    \vspace{-0.2cm}
\end{figure}

We construct separate datasets for the three editing tasks considered in our
experiments: style transfer, video inpainting, and depth-to-video generation.
The training and evaluation splits are disjoint for all tasks.

\noindent \textbf{Style Transfer.}
For video style transfer, we sample source videos from
Ditto~\cite{bai2025scaling}.
Each training sample consists of a source video paired with a textual instruction
specifying the target visual style.
We use 26 distinct styles with 500 video samples per style, resulting in
approximately 13K training pairs in total.
For evaluation, we randomly select 120 held-out videos from Ditto that do not
overlap with the training set.

\noindent \textbf{Video Inpainting.}
For video inpainting, we sample source videos from
VPData~\cite{bian2025videopainter}, resulting in more than 6K training samples.
Each sample contains the target video together with its masked conditioning
input, where the inpainting masks follow the original mask annotations provided
by VPData.
We use 80 held-out VPData samples for evaluation.

\noindent \textbf{Depth-to-Video Generation.}
For depth-to-video generation, we also sample videos from
VPData~\cite{bian2025videopainter}, obtaining approximately 7K training samples.
For each video, frame-wise depth maps are extracted using
Video-Depth-Anything~\cite{chen2025video} and used as the structural condition
for reconstructing the corresponding target video.
The same depth extraction pipeline is applied to both training and evaluation
data to ensure consistent conditioning.
We randomly select 80 held-out videos for evaluation.

\noindent \textbf{Preprocessing.}
All videos are converted to 81-frame clips at a spatial resolution of
$480\times832$.
Task-specific conditioning signals, including masked videos and depth maps, are
temporally aligned with their corresponding target frames.
The resulting training sets contain approximately 13K samples for style transfer,
over 6K samples for video inpainting, and 7K samples for depth-to-video generation.
The evaluation sets contain 120 samples for style transfer and 80 samples each
for video inpainting and depth-to-video generation.
Representative training samples for the three tasks are shown in
Fig.~\ref{fig:dataset_examples}.

\section{Additional Experimental Results}
\label{appendix:results}

\subsection{Results on Other Streaming Models}
\label{appendix:results1}

\begin{figure}[H]
    \centering
    \includegraphics[width=\linewidth]{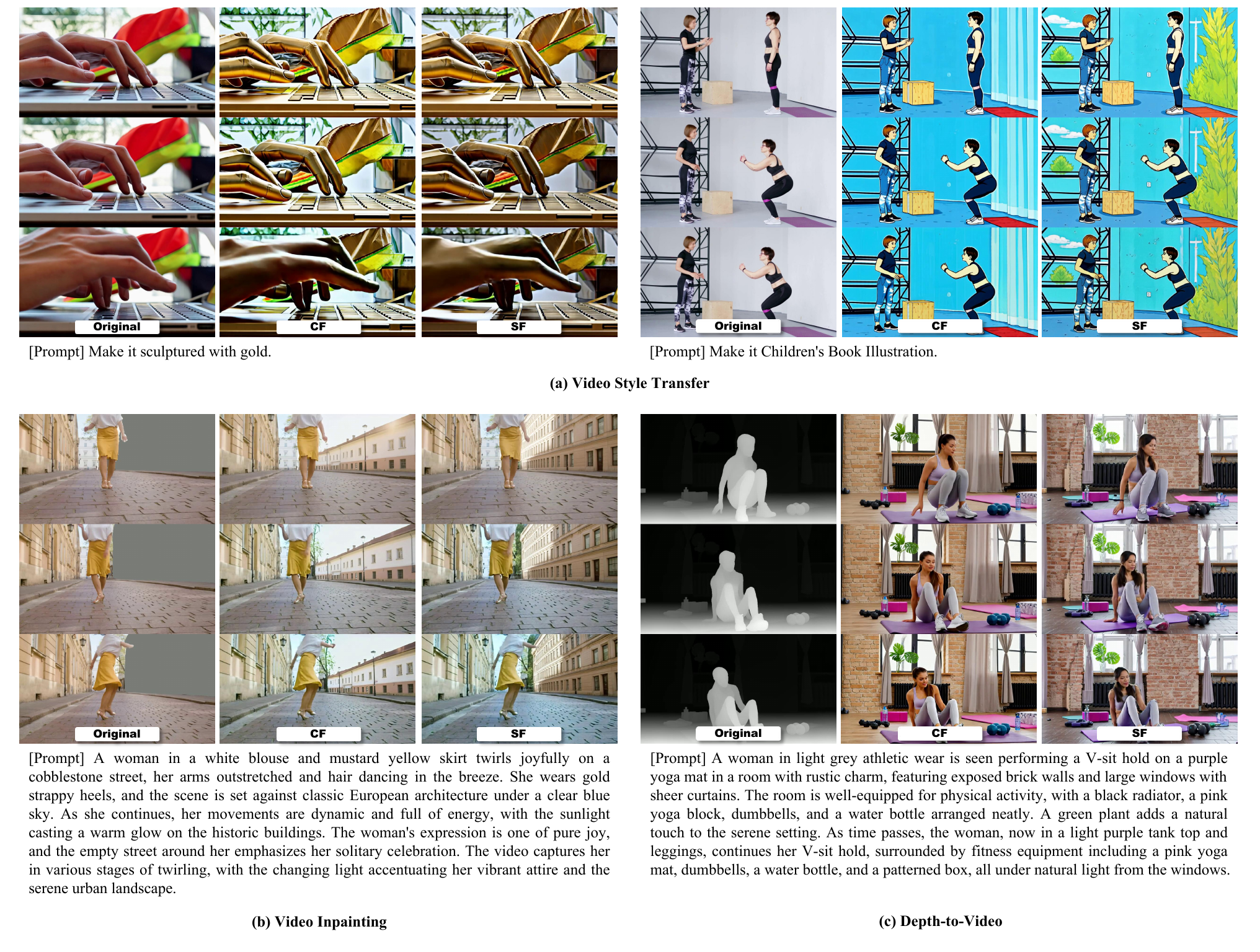}
    \caption{Transfer results on different streaming settings. Note that CF refers to Causal Forcing and SF refers to Self Forcing.}
    \label{fig:other_streaming}
    \vspace{-0.2cm}
\end{figure}

To further examine the transferability of our method across different streaming backbones, we additionally evaluate SVEET on Self Forcing~\cite{huang2025self}, in addition to the Causal Forcing backbone~\cite{zhu2026causal} used in the main experiments. Qualitative comparisons are shown in Fig.~\ref{fig:other_streaming}. Our method produces consistent editing effects on both Causal Forcing and Self Forcing, while preserving the main spatial structure and temporal dynamics of the source videos. These results indicate that the proposed bidirectional-to-streaming transfer strategy is not restricted to a single streaming backbone. This demonstrates the strong generalization capability and versatile applicability of our framework, revealing its great potential for universal video editing across diverse streaming-based generation architectures.

\subsection{More Visual Results}
\label{appendix:results2}


We further present additional results generated by our method in Fig.~\ref{fig:more_ours}, including diverse source videos, editing instructions, and control conditions. These examples demonstrate that SVEET can produce stable and visually consistent streaming editing results across a wide range of scenarios.

\begin{figure}[H]
    \centering
    \includegraphics[width=\linewidth]{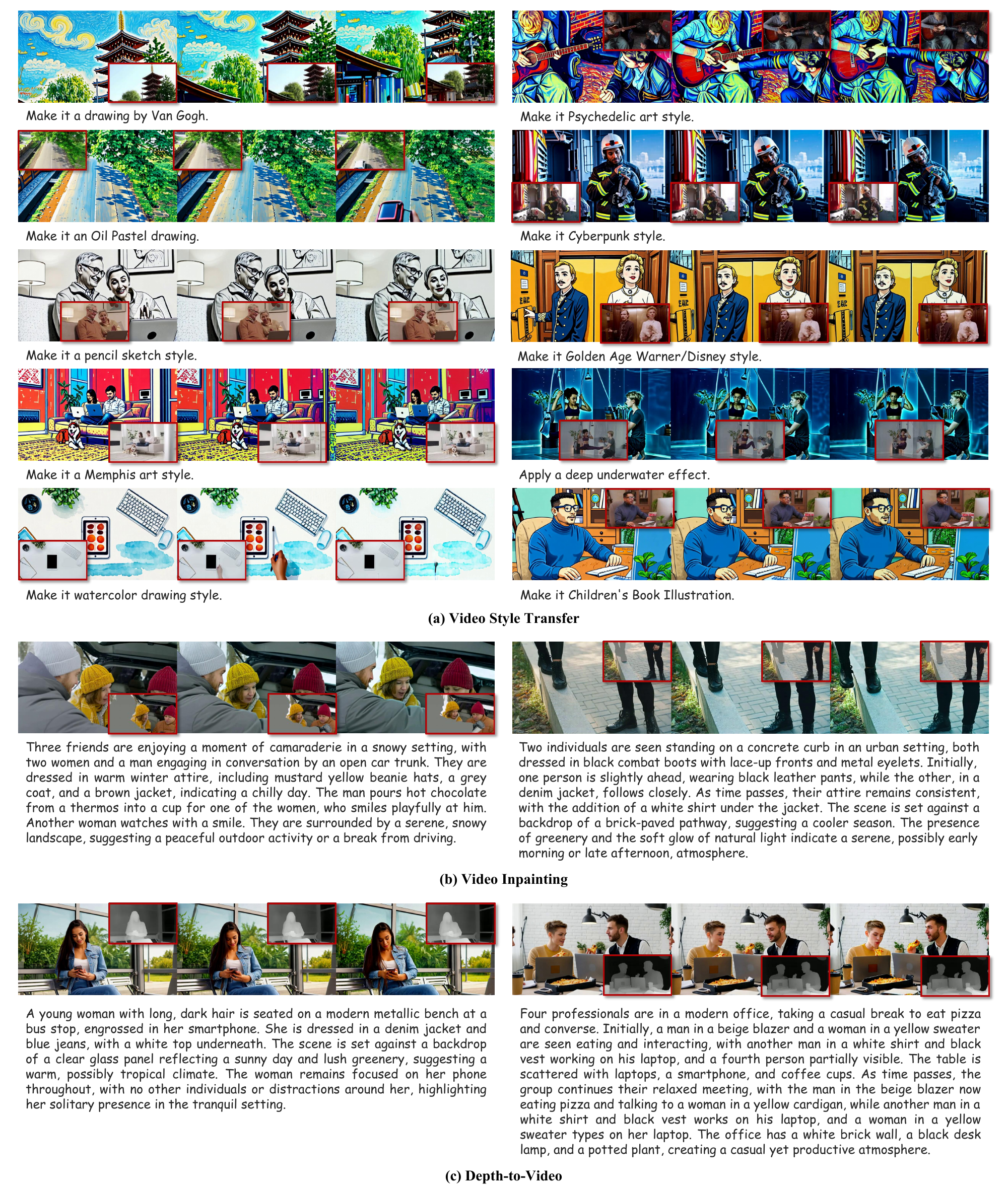}
    \caption{More visualization results.}
    \label{fig:more_ours}
    \vspace{-0.2cm}
\end{figure}

\end{document}